  \providecommand\BibTeX{{%
    \normalfont B\kern-0.5em{\scshape i\kern-0.25em b}\kern-0.8em\TeX}}}
\newcommand{\mob}{M\"{o}bius}
\newcommand{\RS}{{\ensuremath{\widehat{\mathbb{C}}}}}
\newcommand{\R}{{\ensuremath{\mathbb R}}}
\newcommand{\SLC}{{\ensuremath{\textrm{SL}(2, \mathbb{C}})}}
\newcommand{\SUTWO}{{\ensuremath{\textrm{SU}(2)}}}
\newcommand{\OPS}{{\ensuremath{\textrm{L}}}} 
\newcommand{\lnxf}[2]{{\ensuremath{D_{#2}^{#1}}}}
\newcommand{\scalef}[2]{\ensuremath{\lambda_{#1}^2(#2)}}
\newcommand{\T}{\ensuremath{{\mathfrak{ T}}}}
\newcommand{\h}{\ensuremath{\rho}}
\newcommand{\LB}{\ensuremath{{ {\boldsymbol{\zeta}}}}}
\newcommand{\BB}{\ensuremath{{ {\boldsymbol{\xi}}}}}
\newcommand{\fB}{\ensuremath{{ {\mathbf B}}}}
\newcommand{\feat}{\ensuremath{\psi}}
\newtheorem{claim}{Claim}
\newcommand{\Fig}[1]{Figure~\ref{#1}}
\DeclarePairedDelimiterX{\abs}[1]{\lvert}{\rvert}{#1}
\newcommand{\eval}[2]{\ensuremath{\left. #1 \right\rvert_{#2}}}
\begin{document}

\title{\mob{} Convolutions for Spherical CNNs}

\author{Thomas W. Mitchel}
\email{tmitchel@jhu.edu}
\affiliation{%
  \institution{Johns Hopkins University}
}
\author{Noam Aigerman}
\email{aigerman@adobe.com}
\author{Vladimir G. Kim}
\email{vokim@adobe.com}
\affiliation{
  \institution{Adobe Research}
}

\author{Michael Kazhdan}
\email{misha@cs.jhu.edu}
\affiliation{%
  \institution{Johns Hopkins University}
}

\renewcommand{\shortauthors}{Mitchel, et al.}


\begin{abstract}
 \mob{} transformations play an important role in both geometry and spherical image processing -- they are the group of conformal automorphisms of 2D surfaces and the spherical equivalent of homographies. Here we present a novel, \mob{}-equivariant spherical convolution operator which we call \mob{} convolution; with it, we develop the foundations for \mob{}-equivariant spherical CNNs.  Our approach is based on the following observation: to achieve equivariance, we only need to consider the lower-dimensional subgroup which transforms the positions of points as seen in the frames of their neighbors.   To efficiently compute \mob{} convolutions at scale we derive an approximation of the action of the transformations on spherical filters, allowing us to compute our convolutions in the spectral domain with the fast Spherical Harmonic Transform. The resulting framework is flexible and descriptive, and we demonstrate its utility by achieving promising results in both shape classification and image segmentation tasks. 

\end{abstract}


\begin{CCSXML}
<ccs2012>
<concept>
<concept_id>10010147.10010257.10010293.10010294</concept_id>
<concept_desc>Computing methodologies~Neural networks</concept_desc>
<concept_significance>500</concept_significance>
</concept>
<concept>
<concept_id>10010147.10010371.10010396.10010402</concept_id>
<concept_desc>Computing methodologies~Shape analysis</concept_desc>
<concept_significance>300</concept_significance>
</concept>
<concept>
<concept_id>10010147.10010371.10010382.10010383</concept_id>
<concept_desc>Computing methodologies~Image processing</concept_desc>
<concept_significance>300</concept_significance>
</concept>
</ccs2012>
\end{CCSXML}

\ccsdesc[500]{Computing methodologies~Neural networks}
\ccsdesc[300]{Computing methodologies~Shape analysis}
\ccsdesc[300]{Computing methodologies~Image processing}

\keywords{Neural networks, Group equivariance, M\"{o}bius transformations, Conformal transformations, Convolution }

\begin{teaserfigure}
\centering
\begin{picture}(0.98\linewidth,0.36\columnwidth)
\put(0, 0){\includegraphics[width=0.97\linewidth]{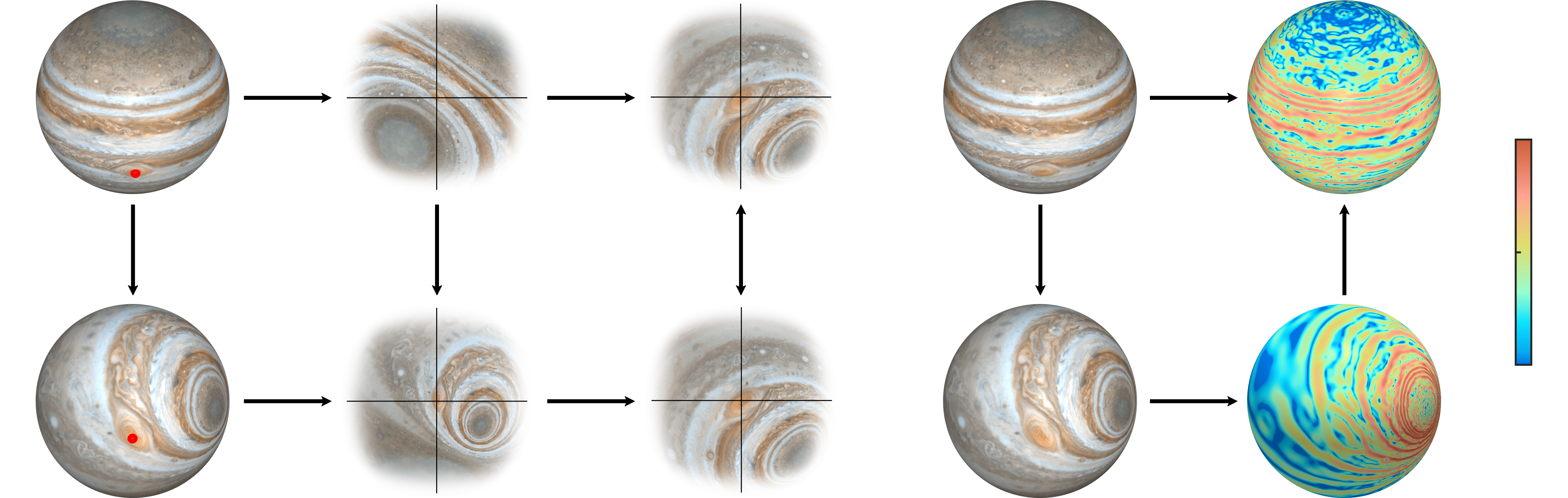}}
\put(108, 168){Frame Operator}
\put(340.5, 168){Density Operator}
\put(-1, 122){$\psi$}
\put(-3, 26,5){$g\psi$}
\put(31, 76){$g$}
\put(120, 76){$\lnxf{g}{z}$}
\put(222, 76){$e$}
\put(82, 134.5){$\log_{z}$}
\put(80.5, 39.5){$\log_{gz}$}
\put(173, 134.5){$\T_{\psi}(z)$}
\put(170, 39.5){$\T_{g\psi}(gz)$}
\put(39, 107){$z$}
\put(33.5, 23.5){$gz$}
\put(316, 77){$g$}
\put(405, 76){$\lambda_{g}^2\boldsymbol{\cdot}$}
\put(368, 134.5){$\h_{\psi}$}
\put(366.5, 39){$\h_{g\psi}$}
\put(466.5, 74){$0$}
\put(466.5, 109.25){$+$}
\put(466.5, 39.5){$-$}
\put(484, 74){$\textrm{ln} \, \h$}
\end{picture}
\caption{ Equivariant convolutions require two ingredients: a \textit{frame operator} and a \textit{density operator}. Filters assign weights based on the relative positions of points and the frame operator $\T$ corrects for the deformation of the local ``tangent space'' under a \mob{} transformation $g$. Similarly, the density operator $\rho$ adjusts for the change in the area measure used for integration, proportional to the conformal scale factor $\lambda_{g}^2$.\label{fig:teaser}}
\end{teaserfigure}

\maketitle

\section{Introduction}
\label{s:intro}
 Convolutional neural networks (\textbf{CNNs}) are effective because convolution responds to a contextualized window, forcing the learning to be \textit{translation-equivariant}. However, vanilla CNNs assume a fixed coordinate frame and lose effectiveness in the presence of deformations that change the frame. This has lead to the development of more general notions of convolution equivariant to transformation groups including rotations \cite{cohen2016group, worrall2017harmonic} and dilations \cite{worrall2019deep, sosnovik2019scale, finzi2020generalizing}. Critically, the notion of rotation-equivariance has facilitated the generalization of CNNs to domains without a canonical orientation at each point such as the sphere \cite{cohen2018spherical, Cohen2019, esteves2020spin} and arbitrary surfaces \cite{de2020gauge, wiersma2020cnns, Mitchel_2021_ICCV}. The resulting networks are \textit{isometry-equivariant} -- able to repeatably characterize local features in the presence of distance-preserving transformations -- and have excelled in fundamental geometry processing tasks such as shape classification, segmentation, and correspondence. 
 
 Despite their success, rotation- and isometry-equivariant CNNs can fail to achieve adequate performance in the presence of the kinds of complex deformations commonly found in real-world image and shape data \cite{Mitchel_2021_ICCV}. Such deformations may potentially be better modeled by higher-dimensional transformation groups. For example, homographies (projective transformations) better approximate changes in camera viewpoints than similarities (rotations and dilations) \cite{hartley2003multiple} and, for spherical images, can be represented using conformal transformations~\cite{el2014conformal,SchleimerS16squares}. For geometry processing, conformal (angle-preserving) transformations encompass a broader class of deformations than isometries that still preserve the sense of ``shape". \cite{Levy:TOG:2002,Gu:TMI:2004,Crane:2011:STD}. 
 
 We present a novel spherical convolution operator, equivariant to \mob{} transformations, which we call \textit{\mob{} convolution} (\textbf{MC}). Using this operator, we develop the foundations for \textit{\mob{}-equivariant} spherical CNNs. Our convolutions are flexible, and we demonstrate the utility of our \mob{}-equivariant CNNs by achieving promising results on standard benchmarks in both genus-zero shape classification and spherical image segmentation.
 
 Our approach is based on a key observation: while the \mob{} group is six-dimensional, its action on the characterization of the position of a point relative to its neighbor can be described by a four-dimensional subgroup. By defining an equivariant frame operator (\Fig{fig:teaser}, left) at each point with which we align the filter, we correct for the change in the relative positions induced by this subgroup. To compute convolutions, input features are mapped to a density distribution (\Fig{fig:teaser}, right), controlling for the change in area measure, and integrated against the aligned filters over the sphere, rather than the group itself.  To facilitate efficient evaluations, we parameterize filters using log-polar basis functions from which we derive an approximation of the action of the frames, allowing us to compute our convolutions via the Fast Spherical Harmonic Transform \cite{driscoll1994computing, kostelec2008ffts}. Our implementation is publicly available at \href{https://github.com/twmitchel/MobiusConv}{\texttt{github.com/twmitchel/MobiusConv}}.

\section{Related Work}
\label{s:related}
Group-equivariant CNNs were first introduced by \cite{cohen2016group, cohen2019general}, wherein kernels are parameterized in terms of equivariant basis functions \textit{on the group itself}. Convolution is performed by lifting features from the domain and searching over all possible transformations of the kernels. This becomes intractable for non-compact groups where the domains of integration are unbounded and the representations are infinite-dimensional, though recent work by \cite{finzi2020generalizing} mitigates these problems by considering only the origin-preserving subgroups and integrating with respect to an equivariant Monte Carlo estimator to facilitate evaluations. This approach is flexible, and has since been extended to handle general affine transformations and homographies \cite{macdonaldLie}. In tandem, several \textit{fully-connected} networks have been developed that achieve equivariance to non-compact transformation groups including the Lorentz group \cite{bogatskiy2020lorentz}, the Poincar\'{e} group \cite{villar2021scalars}, and the symplectic group \cite{finzi2021practical} -- all closely related to \mob{} transformations.

Equivariant CNNs that integrate over the domain on which the group acts generally parameterize kernels in terms of basis functions that rotate or dilate with the local coordinate system \cite{worrall2017harmonic, weiler2019general, worrall2019deep, sosnovik2019scale}. This approach has been extended to volumetric domains \cite{weiler2019general},  point clouds \cite{qi2017pointnet}, and canonical domains such as the sphere \cite{cohen2018spherical, esteves2020spin}. However, finite-dimensional equivariant bases often don't exist for non-commutative and non-compact transformation groups of interest, precluding generalizations to groups that act projectively.

Equivariance is a necessary condition for the transposition of convolutional frameworks to domains without canonical coordinate systems such as arbitrary 2D surfaces. Existing surface networks can be broadly categorized into several emerging paradigms: \textit{Representational} methods \cite{hanocka2019meshcnn, lahav2020meshwalker} exploit the ubiquitous representation of surfaces as triangle meshes to form operators equivariant to local similarities; \textit{Diffusive} approaches \cite{sharp2020diffusion, smirnov2021hodgenet, Yi_2017_CVPR} formulate convolution in terms of spectral kernels and accelerate computations in a low-frequency eigenbasis; and \textit{Transporting} networks \cite{Mitchel_2021_ICCV, wiersma2020cnns} propagate tangent vector features that transform with local coordinate systems. Despite their success in a variety of tasks, existing surface networks are only repeatable up to isometries, an we believe our approach to be the first surface network equivariant to conformal transformations.

\section{Method Overview}
\label{s:overview}
Our method is based on a powerful observation: instead of dealing with the six-dimensional group of \mob{} transformations, we only need to consider a four-dimensional subgroup. We first review the action of \mob{} transformations on the sphere, and define a notion of relative positions between points analogous to the logarithm map on surfaces, and show how the latter transforms under the former. 

We then introduce \mob{} convolutions, which provide a method for \mob{}-equivariant spatial aggregations on the sphere.  \mob{} convolutions are part of the \textit{extended convolution} framework \cite{mitchel2020efficient}  which allows filters to adaptively transform as they shift over a manifold by defining an equivariant frame at each point. This framework forms the basis for recently proposed state-of-the art surface descriptors \cite{mitchel2020echo} and CNNs \cite{Mitchel_2021_ICCV}. We facilitate an efficient discretization by parameterizing filters using log-polar basis functions from which we derive a linearized approximation of the action of the frames, allowing us to compute our convolutions via the fast spherical harmonic transform \cite{driscoll1994computing, kostelec2008ffts}.

We complete the foundations for \mob{}-equivariant CNNs by introducing a conformally-equivariant normalization layer based on filter response normalization \cite{singh2020filter} and we validate equivariance by direct experimental evaluation.  The principle module in applications is a simple \mob{} convolution ResNet (\textbf{MCResNet}) block \cite{he2016deep}, which is self-contained and flexible.  We demonstrate the  utility of our framework by achieving promising results on standard benchmarks in both genus-zero shape classification and spherical image segmentation.

\section{M\"obius Transformations}
\label{s:mobius}
\mob{} transformations can be understood by associating the two-sphere with the \textit{Riemann sphere},  $\RS = \mathbb{C} \cup \{\infty\}$, via the  stereographic projection taking the north pole to the origin.
\mob{} transformations are described by the action of \SLC{}, the group of matrices in $\mathbb{C}^{2 \times 2}$ with unit determinant, on \RS{} by fractional linear transformations. That is, for any $g = \left[ \begin{smallmatrix} a & b \\ c & d \end{smallmatrix}\right] \in \SLC{}$ and $z \in \RS$, 
\begin{align}
    g z  \equiv \frac{a z + b}{c z + d}. \label{flt}
\end{align}

\subsection{Generalizing the logarithm and exponential maps}
To parameterize the sphere about a point $z\in\RS$, we borrow the notion of the exponential and logarithm maps from Riemannian geometry, defining the \textit{generalized logarithm} of $z$ as a rotation $\log_z \in \SUTWO{}$ taking $z$ to the origin,
\begin{align}
\log_z \equiv \frac{1}{|c|\sqrt{1+|z|^2}}\left(\begin{array}{cc}c & -cz \\ \bar{c}\bar{z} & \bar{c}\end{array}\right) \label{gen_log}
\end{align}
with $c=\sqrt{\bar{z}}$. (Though any choice of $c$ gives a rotation taking $z$ to the origin, the above choice ensures that the great circle going through the origin and $z$ is mapped to the real line, enabling the use of the fast Spherical Harmonic Transform in the implementation of \S\ref{s:discretization}.)

Then, for any point $y \in \RS$, we can express the ``position" of $y$ in the frame of $z$ as $\log_z  y \in \RS$. By analogy to Riemmannian geometry, the generalized logarithm maps $\RS$ to the ``tangent space" at $z$. We make this explicit by denoting the image of the logarithm map at $z$ as $\RS_z$:
$$\log_z:\RS\rightarrow\RS_z$$
though formally $\RS$ and $\RS_z$ are the same space -- the Riemann sphere. Similarly, we define the \textit{generalized exponential} of $z$ as the inverse of the generalized logarithm, $\exp_z \equiv \log_z^{-1}:\RS_z\rightarrow\RS$.

\subsection{Action of the origin-preserving subgroup}
We will think of convolution filters as functions defined on a canonical ``tangent space'' describing the weight with which a point contributes to its neighbor in terms of the position of the neighbor in the frame of that point. To design our network, we need to define a \mob{}-equivariant convolution operator. To this end we need to understand how the position of a point in the frame of its neighbor changes under the action of \mob{} transformations.

Describing the transformation from one coordinate frame to the other is straight-forward: Beginning in $\RS_z$, we 1). Map to $\RS$ by applying $\exp_z$; 2). Transform $\RS$ by $g$; and 3). Map back to $\RS_{gz}$ using $\log_{gz}$. Composing these give the \mob{} transformation,
\begin{align}
    \lnxf{g}{z} \equiv \log_{gz}\circ \ g\circ \exp_z:\RS_z\rightarrow\RS_{gz} \in \SLC{} , \label{param_xform}
\end{align}
with the notation chosen to reflect dependence on both $z$ and $g$. 

From Equation~(\ref{param_xform}) and the definitions of the generalized logarithm and exponential, $\lnxf{g}{z}$ must belong to the origin-preserving subgroup $\OPS{} \subset \SLC{}$, consisting of the lower-triangular elements of \SLC{}. This follows from the facts that $z$ maps to $gz$ under the action of $g$ and that both $z$ and $gz$ are the origin in their respective tangent spaces. This simple but critical observation implies that in defining equivariant convolution, we only need to consider the four-dimensional origin-preserving subgroup, not the full, six-dimensional group of \mob{} transformations. 

\section{M\"obius Convolution}
\label{s:conv}
As in \cite{mitchel2020efficient} we implement convolution by shifting a filter over the domain, aligning the shifted filter using a frame field, and distributing the values of a density function to neighboring points, with distribution weights given by the aligned filter.

Rather than have the user provide the transformation field and density directly, our framework derives these from an input signal in such a way so as to ensure the ensuant convolution is \mob{}-equivariant. Specifically, we construct a \textit{frame operator} $\T$ and \textit{density operator} $\h$ that take in a real-valued function on $\RS$ and return a lower-triangular frame field and a real-valued density field, respectively,
\begin{align}
\begin{aligned}
\T:L^2(\RS,\R) &\rightarrow L^2(\RS, \, \textrm{L}) \\
\feat{} & \mapsto \T_{\feat{}}
\end{aligned}
\quad \textrm{and} \quad 
\begin{aligned}
\h: L^2(\RS,\R) &\rightarrow L^2(\RS,\R) \\
\feat{} & \mapsto \h_{\feat{}}
\end{aligned} 
\label{trans_maps}
\end{align}
Given a \mob{} transformation $g\in\SLC$ and a point on the Riemann sphere, $z\in\RS$, the frame operator corrects for the deformation of the tangent space resulting from $g$, as characterized by the origin-preserving transformation $D_z^g$ from Equation~(\ref{param_xform}) (Figure~\ref{fig:teaser}, left). Similarly, the density operator adjusts for the change in the area measure used for integration, given by the conformal scale factor $\scalef{g}{z}$ (Figure~\ref{fig:teaser}, right).

Given operators $\T$ and $\h$, the \mob{} convolution of a function $\feat{}$ with with a filter $f$, both in $L^2(\RS,\R)$, can formally be expressed as the function in $L^2(\RS,\R)$ with
\begin{align}
    (\feat * f)(y) &= \int_{\RS} \h_{\feat{}}(z) \,
    \big[\T_{\feat{}}(z)  \, f \big]\big( \log_z y \big) \ dz \label{m_conv},
\end{align}
where $\T_{\feat{}}(z)  \, f \equiv f \circ \big[\T_{\feat{}}(z)\big]^{-1}$ denotes the standard action of \mob{} transformations on $f$ by left shifts.

That is, to get the value at a point $y\in\RS$, we 1).~Iterate over all neighbors $z$; 2).~Compute the position of $y$ in the frame at $z$; 3).~Evaluate the filter at that point; and 4).~Accumulate the density at $z$ weighted by the filter value.
Following~\cite{mitchel2020efficient}, instead of aggregating features with respect to a single frame at each point, the influence of each frame is spread across the neighbors, making the construction robust to noise and other nuisance factors affecting the stability of the frame field.

\subsection{Equivariance}
Following the above discussion, $\T$ and $\h$ must satisfy certain conditions to ensure that \mob{} convolutions are \mob{}-equivariant; \textit{i.e.} that for any function $\feat{}$, filter $f$, and \mob{} transformation $g$, \mob{} convolution commutes with the action of $g$ by left shifts,
\begin{align}
   g \, (\feat{} * f) = (g \, \feat{} * f). \label{equiv_conv}
\end{align}
A sufficient condition for the equation to hold is if for all $\feat{} \in L^2(\RS,\R)$ and $g \in \SLC{}$, the operators $\T$ and $\h$ satisfy
\begin{align}
\lnxf{g}{z} \, \T_{\feat{}}(z) = \T_{g \, \feat{}}(gz)
\qquad \textrm{and} \qquad 
\lambda_g^{-2}(z) \, \h_{\feat{}}(z) = \h_{g \, \feat{}}(gz),
\label{hT_cond}
\end{align} 
for all $z \in \RS$. The condition for $\T$ follows from Equation~(\ref{param_xform}), and that on $\h$ comes from the change of variables in the integral of Equation~(\ref{m_conv}). 
In particular these conditions are satisfied when the frame and density operators are defined as
\begin{align}
    \T_{\psi}(x) &\equiv \begin{bmatrix} \left( d \log_{x}\feat{} \big\vert_{0}\right)^{-\frac{1}{2}}  & 0 \\
    \left(\frac{1}{2} \nabla d  \log_{x} \feat{} \big\vert_{0}\right) \left( d \log_{x}\feat{}  \big\vert_{0}\right)^{-\frac{3}{2}}& \left( d \log_{x}\feat{} \big\vert_{0}\right)^{\frac{1}{2}}\end{bmatrix} \label{deriv_frame}
\end{align}
and
\begin{align}
    \h_{\psi}(x) &\equiv \abs[\Big]{ \, d \log_{x}\feat{} \big\vert_{0} \, }^{\, 2}, \label{deriv_density}
\end{align}
where $d \log_{x}\feat{}\big\vert_{0}, \, \nabla d \, \log_{x} \feat{}\big\vert_{0} \in \mathbb{C}$ are the differential and Hessian of $\log_{x} \feat{}
$ evaluated at the origin. We note that at a point where $\T_{\feat{}}(x)$ is ill-defined -- those at which $d \log_x \feat{} \big\vert_{0}$ vanishes -- $\h_{\feat{}}(x)$ also vanishes and the point contributes nothing to the convolution.


Proof that the conditions in Equation~(\ref{hT_cond}) imply \mob{}-equivariance, and that the operators defined in Equations~(\ref{deriv_frame}--\ref{deriv_density}) satisfy the conditions of Equation~(\ref{hT_cond}) is provided in the supplement (\S\ref{a:equivariance}).

\section{Discretization}
\label{s:discretization}
To efficiently compute \mob{} convolutions at the scale necessary to build CNNs, we develop an implementation based on the fast Spherical Harmonic Transform \cite{driscoll1994computing, kostelec2008ffts}.  We give an outline of this process below and leave the details to the supplement (\S\ref{a:filter_xform}). 

\subsubsection*{Identity Convolution with the Spherical Harmonic Transform}
To simplify the calculation, we first consider a simpler non-equivariant convolution, we call an \textit{identity convolution}, where we replace the frame and density operators from Equation~(\ref{m_conv}) with the trivial frame field $\T_{\psi}(z)=e$ (with $e$ the identity) and the density $\h_\psi(z)=\psi(z)$, 
\begin{align}
    (\feat{} \, {*}_{e} \, f)(y) &= \int_{\RS} \feat{}(z) \,  f\big(\log_z y \big) \ dz \label{e_conv},
\end{align}
using ${*}_{e}$ to distinguish it from the equivariant convolution.

Assuming that $\feat{}$ and $f$ are $B$ band-limited functions, they can be expressed in terms of their spherical harmonic decompositions as 
\begin{align}
    \feat{} = \sum_{l=0}^{B-1} \sum_{m=-l}^l \boldsymbol{\feat{}}_{lm} \, Y_l^m  \quad \textrm{and} \quad f = \sum_{l'=0}^{B-1} \sum_{m'=-l'}^{l'} {\mathbf f}_{l'm'} \, Y_{l'}^{m'}.\label{fn_decomp}
\end{align}
Recalling that $\log_z$ is a rotation, we know that it preserves the frequency content and expand
\begin{align*}
f\circ\log_z
= \sum_{l'=0}^{B-1}\sum_{|m'|\leq l'}\sum_{|m''|\leq l'}{\mathbf f}_{l'm'}D_{-m'm''}^{l'}\big(\log_z\big)Y_{l'}^{m''}
\end{align*}
where $D_{-m'm''}^{l'}$ is the Wigner-D function giving the $(l',m'')$-th spherical harmonic coefficient of the rotation of $Y_{l'}^{m'}$. Furthermore, using the fact that in the $z-y-z$ Euler angle notation our definition of $\log_z$ corresponds to a rotation described by an Euler triplet whose first entry is zero, it follows that the integral vanishes
\begin{align*}
0 = \int_{\RS}Y_l^m(z)\cdot  D_{-m'm''}^{l'}\big(\log_z\big)\ dz
\end{align*}
whenever $m''\neq m$. Thus, expanding Equation~(\ref{e_conv}) we get
\begin{align}
    (\feat{} \, {*}_{e} \, f)(y) &= \sum_{l'=0}^{B-1}\sum_{m''=-l'}^{l'}
    \left[\sum_{l=|m''|}^{B-1}\boldsymbol{\feat}_{lm''} \left(\sum_{m'=-l'}^{l'} {\mathbf f}_{l'm'} 
    \varDelta_{ll'}^{m'm''}\right)
    \right]Y_{l'}^{m''}
    \label{e_conv_decomp}
\end{align}
where the value of $\varDelta_{ll'}^{m'm''}$ is independent of $\feat{}$ and $f$,
\begin{align}
\varDelta_{ll'}^{m'm''} =\int_{\RS}Y_l^{m''}(z)\cdot D_{-m'm''}^{l'}\big(\log_z\big)\ dz. \label{conv_coeff}
\end{align}

From this, we can compute identity convolutions efficiently by: 1). Computing the spherical harmonic coefficients of $\feat{}$ and $f$ using the fast SHT; 2). Summing the two sets of coefficients according to Equation~(\ref{e_conv_decomp}) to get the coefficients of the convolution; and 3). Applying the fast inverse SHT to reconstruct the convolution.

The complexity of steps 1 and 3 are proportional to those of the fast SHT, which is $O(B^2 \log^2B)$. However, in our implementation we compute the discrete Legendre transform via sparse matrix multiplication for a total complexity of proportional to $O(B^3 \log B)$, which we find to be more efficient on the GPU. Step 2 has complexity $O(B^4)$, and we show that this computation can be re-used in computing the full (equivariant) convolution.

\begin{figure}[t]
\centering
\begin{picture}(0.9\linewidth,0.67\columnwidth)
\put(5,10){{\includegraphics[width=0.9\linewidth]{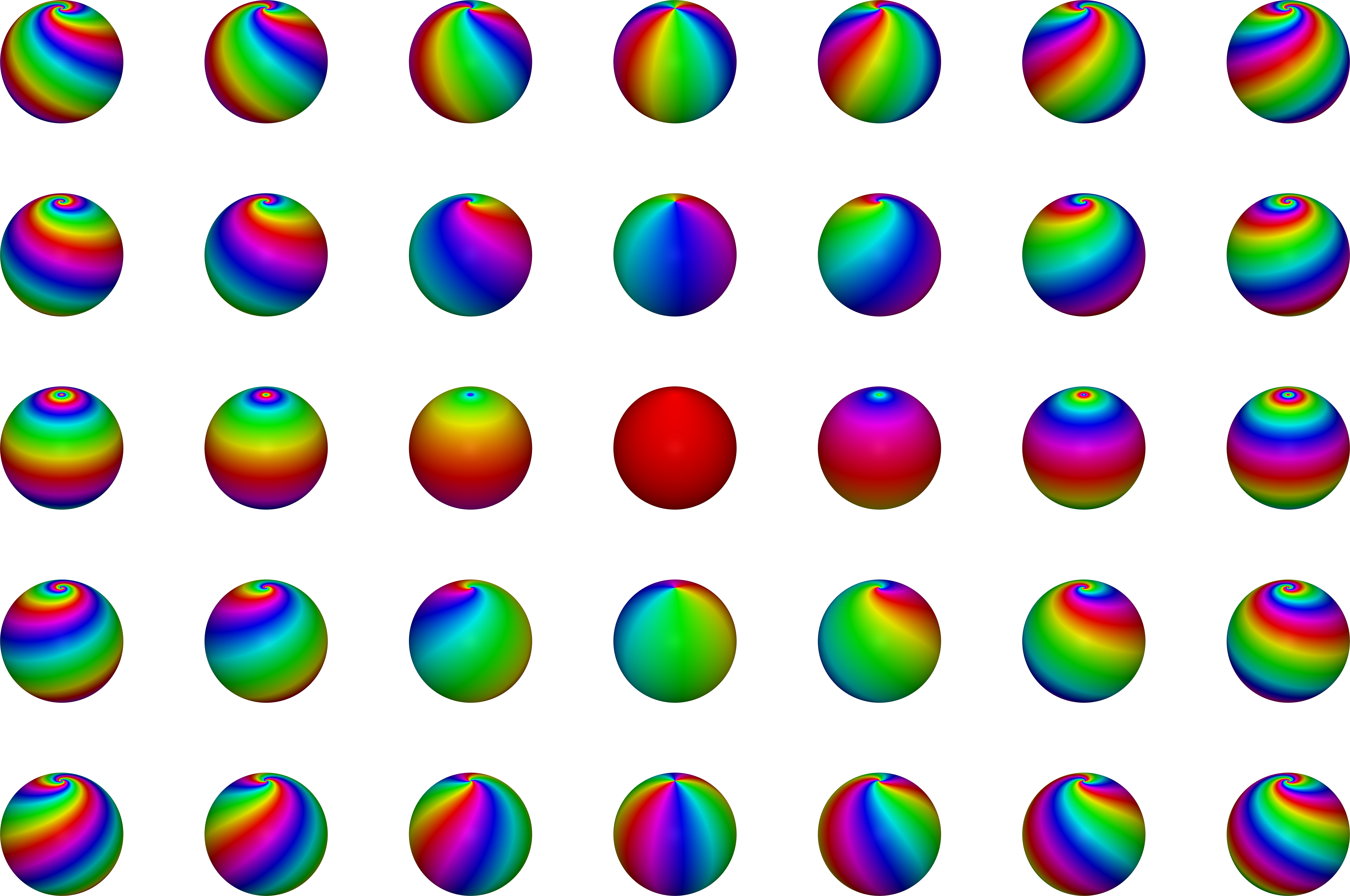}}}
\put(112,0){$s$}
\put(122,2){\vector(1,0){50}}
\put(106,2){\vector(-1,0){50}}
\put(-10,80){$m$}
\put(-7,93){\vector(0,1){30}}
\put(-7,72){\vector(0,-1){30}}
\end{picture}
    \caption{Spherical log-polar functions $\fB_{ms}^t$ at integer exponents with $t=0.15$.
    \label{log_polar_trip}}
\end{figure}

\subsubsection*{Spherical Log-Polar Bases} To efficiently compute arbitrary \mob{} convolutions as in Equation~(\ref{m_conv}), we approximate them as sums of identity convolutions. To do so, we choose a basis for our filters that enables an approximation of the action of $\T_{\psi}$.

To this end we use linear combinations of log-polar (Fourier-Mellin) basis functions \cite{vilenkin1978special,vilenkin1991representation},
\begin{align}
    \fB_{ms}^t(z) \equiv \frac{|z|^{is}}{|z|^t} \left(\frac{z}{|z|}\right)^m  \label{log_polar}
\end{align}
with $m\in{\mathbb Z}$ and $s,t\in{\mathbb R}$. These function are localized about $z=0$ (resp. $z=\infty$) when $t>0$ (resp. $t<0$), are discontinuous at $z\in\{0,\infty\}$ when $m\neq0$ (since the argument of $z$ is not defined), and singular at $z=0$ (resp. $z=\infty$) when $t>0$ (resp. $t<0$). We note that, for the purposes of integration, the singularity can be ignored when $t<1$. Loosely, this follows from the fact that $\int\frac{1}{x^t}=\frac{1}{1-t}x^{1-t}+c$ which is bounded at $x=0$ whenever $t\in(0,1)$. Noting also that $\fB_{ms}^{t}(z) = \fB_{-m-s}^{-t}(1/z)$, it follows that the functions are continuous away from $\{0,\infty\}$ and, for the purposes of integration, singularities at $\{0,\infty\}$ can be ignored when $|t|<1$.

The first several basis functions at integer frequencies $\abs[]{m} \leq 2$ and $\abs[]{s} \leq 3$, with $t = 0.15$ are shown in Figure~{\ref{log_polar_trip}}. The complex-valued functions are visualized using the HSV scale: hue and value are determined by the arguments and magnitudes of the function values, and saturation is fixed at one.

Using these as basis functions, we consider filters in the span \begin{align}
    f = \sum_{m=-M}^M\sum_{s=-N}^N  b_{ms} \, \fB_{ms}^t \label{filter_param}
\end{align}
where $s$ is constrained to be an integer and $t$ is positive so as to localize the filter about the origin. As our filters are real-valued we have $\overline{b_{-m-n}} = b_{mn}$, giving $(2M + 1)(2N + 1)$ real parameters. 

\subsubsection*{Approximating the Transformation of Filters} Unfortunately, it is not the case that the space of band-limited filters spanned by the $\fB_{ms}^t$ is fixed under the action of the lower-triangular subgroup. This is because, in general, for non-compact, non-commutative groups, a space of functions fixed under the action of the group (i.e. a representation) will be infinite-dimensional.

However, we show in the supplement (\S\ref{a:filter_xform}) that, given a filter as in Equation~(\ref{filter_param}) and a lower-triangular matrix $L\in\OPS{}$, the transformation of the filter $f$ by $L$ can be expanded as
\begin{align}
    L \, f = \sum_{m = -\infty}^{\infty} \int_{-\infty}^{\infty}  \sum_{j=1}^3 \prescript{}{j}{\LB}_{ms}^{t\sigma_j}\big(L,{\mathbf b}\big)  \, \fB_{ms}^{\sigma_j} \, ds, \label{true_form}
\end{align}
where ${\mathbf b}$ is the $(2M+1)\times(2N+1)$-dimensional vector of coefficients of $f$, $m$ is now summed over all integers, $s$ is continuous and integrated over the real line, $\sigma_1$, $\sigma_2$, and $\sigma_3$ (the localization values) are any real values satisfying $t<\sigma_1<2$, $t-1<\sigma_2<0$, and $\sigma_3 = t$, and $\prescript{}{j}{\LB}_{ms}^{t\sigma_j}$ are functions taking a lower-triangular matrix and a set of filter coefficients, and returning the coefficient of $\fB_{ms}^{\sigma_j}$ in the expansion of the transformed filter. Here, the integral is equivalent to an inverse Mellin transform with frequency variable $s$, and the bounds on $\sigma_1$ and $\sigma_2$ are necessary to ensure invertibility \cite{vilenkin1978special, vilenkin1991representation}.

Obviously, the infinite summation and the integration in Equation~(\ref{true_form}) make evaluation unfeasible. We propose a practical implementation by truncating the summation over the angular frequency $m$, and replacing the integration over the real line with a discrete approximation using quadrature. The summation is a result of the addition theorem for Bessel functions \cite{watson1995treatise} which appear in the derivation of $\prescript{}{j}{\LB}_{ms}^{t \sigma_j}$; it converges rapidly at low frequencies and can be well-approximated with only several terms \cite{chirikjian2016harmonic}.  The use of quadrature is motivated by the observation that for a fixed transformation $L$ and filter coefficients $\mathbf{b}$ the function $\prescript{}{j}{\LB}_{ms}^{t\sigma_j}$ tends to be smooth and falls off quickly away from $s=0$. Using the approximation, we get
\begin{align}
   L \, f  \approx \sum_{m = -M'}^{M'} \sum_{q=1}^Q  \sum_{j=1}^3 w_q \, \prescript{}{j}{\LB}_{ms_q}^{t\sigma_j}\big(L,{\mathbf b}\big)  \, \fB_{ms_q}^{\sigma_j} \label{approx}
\end{align}
where $\{s_q\}\subset\R$ are the quadrature points and $\{w_q\}$ are the weights.

We remark that the principle idea behind the expansion in Equation~(\ref{true_form}) involves exploiting the symmetry of the spherical log-polar basis functions under \mob{} transformations taking $z$ to $-z^{-1}$. This allows us to replace the projective action of $\OPS{}$ with the affine action of the upper-triangular matrices -- the group of rotations, translations, and dilations -- whose representations are better understood \cite{vilenkin1978special, vilenkin1991representation}.

\subsubsection*{Efficient \mob{} Convolutions} Plugging the approximation in Equation~(\ref{approx}) into the definition of \mob{} convolution in Equation~(\ref{m_conv}) and moving the sums outside the integral gives
\begin{align}
(\feat{} * f)
&\approx
\sum\limits_{\substack{-M'\leq m\leq M'\\1\leq q \leq Q\\1\leq j\leq 3}}
\left(\rho_{\feat{}}\, w_q\, \prescript{}{j}{\LB}_{ms_q}^{t\sigma_j}\big(\T_{\feat{}}, {\mathbf b}\big) \,\, {*}_{e} \,\, \fB_{ms_q}^{\sigma_j} \right).
\label{conv_sums}
\end{align}
Thus, by approximating the pointwise action of the frame operator $\T_{\feat{}}(z)$, we can approximate an arbitrary \mob{} convolution as a sum of identity convolutions.  The components of $\prescript{}{j}{\LB}_{ms_q}^{t\sigma_j}$ depending only on $s$ can be pre-computed for a fixed set of quadrature points so that, in practice, the complexity of evaluating the function at run time is linear in the coefficients of $\mathbf b$.

\subsection{Complexity}
In the approximation of \mob{} convolution in Equation~(\ref{conv_sums}), the right side of the identity convolutions are independent of the filter coefficients, so the innermost bracketed term in Equation~(\ref{e_conv_decomp}) can be pre-computed for a given band-limit $B$ (for every angular frequency $m$, quadrature point $s_q$, and localization index $\sigma_j$). Thus, the $O(B^4)$ computational bottle-neck in computing the identity convolution need only be performed once and the total complexity of computing the \mob{} convolution is $O(M' Q B^3 \log B)$. In applications, we find that setting $M' = M + 1$ and using a $Q=30$ point trapezoidal quadrature rule in Equation~(\ref{approx}) allows us to both suitably approximate the transformation of the filter and scale up to $B = 64$.

\section{\mob{}-Equivariant Spherical CNNS}
\label{s:network}
Constructing \mob{}-equivariant spherical CNNs with \mob{} convolutions is straight-forward and requires no specialized architecture. The atomic units are the same as those found in regular CNNs -- a convolutional layer, followed by normalization and a non-linearity.

\subsection{Convolutional layers}
For a \mob{} convolution layer mapping $C$-channel input features $\feat{}\in L^2(\RS,\R^C)$ to $C'$-channel output features $\feat{}'\in L^2(\RS,\R^{C'})$, the $c'-$th output feature $\feat{}_{c'}'$ is computed in the usual manner by summing the convolutions of the input features with the filters in the $c'-$th row of the bank. However, the structure of Equation~(\ref{conv_sums}) allows us to preform the reduction over the input channels \textit{before} computing the convolutions in the sum, such that
\begin{align}
    \feat{}_{c'}' = \sum\limits_{\substack{-M'\leq m\leq M'\\1\leq q \leq Q\\1\leq j\leq 3}}
 \left( \, \sum_{c=1}^{C} \,  \h_{\feat{}_c} \, w_q \, \prescript{}{j}{\LB}_{ms_q}^{ t\sigma_j}\big(\T_{\feat{}_c},{\mathbf b}^{cc'}\big) \, \, {*}_{e} \, \, \fB_{ms_q}^{\sigma_j} \, \right),  \label{conv_reduction}
\end{align}
where ${\mathbf b}^{cc'}$ denotes the $(2M + 1)(2N + 1)$ parameters for the $(c,c')$-th filter in the bank. Thus, for each convolutional layer mapping $C$ input features to $C'$ output features, we only need to compute $C'$ \mob{} convolutions instead of $C \times C'$.

This advantage is not without caveat. As discussed in the supplement (\S\ref{a:overhead}), a naive implementation of the inner sum over the input channels produces large intermediate tensors at high resolutions ($B \geq 64$), which can quickly fill GPU memory . Our layers are implemented in PyTorch \cite{paszke2019pytorch}, where we fuse this operation to reduce its overhead.

\subsection{Normalization and Non-linearities}
Standard normalization techniques don't commute with \mob{} transformations, since the mean and standard deviation of spherical signals are not invariant under dilations. Instead, we introduce a conformally-equivariant normalization layer based on filter response normalization \cite{singh2020filter}, replacing the square mean with the Dirichlet energy which \textit{is} invariant under \mob{} transformations. The normalization is applied on a per-channel basis independent of the batch size via the mapping 
\begin{align}
    \feat{}_c \mapsto \frac{ \alpha_c \, \feat{}_c}{  \sqrt{ \bigints_{\RS}  \h_{\feat{}}(z)  \, dz + \epsilon_c}} + \beta_c, \label{fr_d}
\end{align}
where $\h_{\feat{}}$ is defined as in Equation~(\ref{deriv_density}) and $\alpha_c, \, \beta_c \in \mathbb{R}$ and $\epsilon_c \in \mathbb{R}_{>0}$ are learnable per-channel parameters.  

Following normalization we apply thresholded activations as non-linearities, which have been shown to better compliment filter response normalization than other activation layers \cite{singh2020filter}. Here, we replace the ReLU with the Mish activation \cite{misra2019mish} which we find improves training speed and performance. Specifically, non-linearities are applied pointwise as,
\begin{align}
    \feat{}_c \mapsto \textrm{Mish}\big(\feat{}_c - \gamma_c\big)  + \gamma_c, \label{nonlin}
\end{align}
where $\gamma_c \in \mathbb{R}$ is a learnable per-channel threshold value. We note that the thresholded activation is not fundamental to our framework, and can be replaced with other activation layers if desired.

\section{Evaluation}
\label{s:evaluation}
We validate our claim of \mob{}-equivariance empirically and demonstrate the utility of \mob{}-equivariant CNNs by achieving strong results in both geometry and spherical-image processing tasks. In the former paradigm, we perform genus-zero shape classification by conformally mapping surfaces to the sphere; in the latter, we consider the task of omni-directional image segmentation. 

Our principal module in applications is an MCResNet block \cite{he2016deep} , consisting of two \mob{} convolutions, each followed by the normalization layer and non-linearity described in Equations~(\ref{fr_d}-\ref{nonlin}), with a residual connection between the input and output streams. We use $M=N=1$ band-limited filters and set $t = 0.15$, optimizing for $\sigma_1$ and $\sigma_2$ and the quadrature points $\{s_q\}$ as described in the supplement (\S\ref{a:quad}). Our framework is implemented in PyTorch \cite{paszke2019pytorch}, and we fit our networks using SGD with Nesterov momentum \cite{sutskever2013importance}, training for $60$ epochs with an initial learning rate of $10^{-2}$, decaying to $10^{-4}$ on a cosine annealing schedule \cite{loshchilov2017sgdr}. 
\subsection{Equivariance}
\begin{figure}[t]
\begin{picture}(0.925\linewidth,0.52\columnwidth)
\put(5,10){{\includegraphics[width=0.95\linewidth]{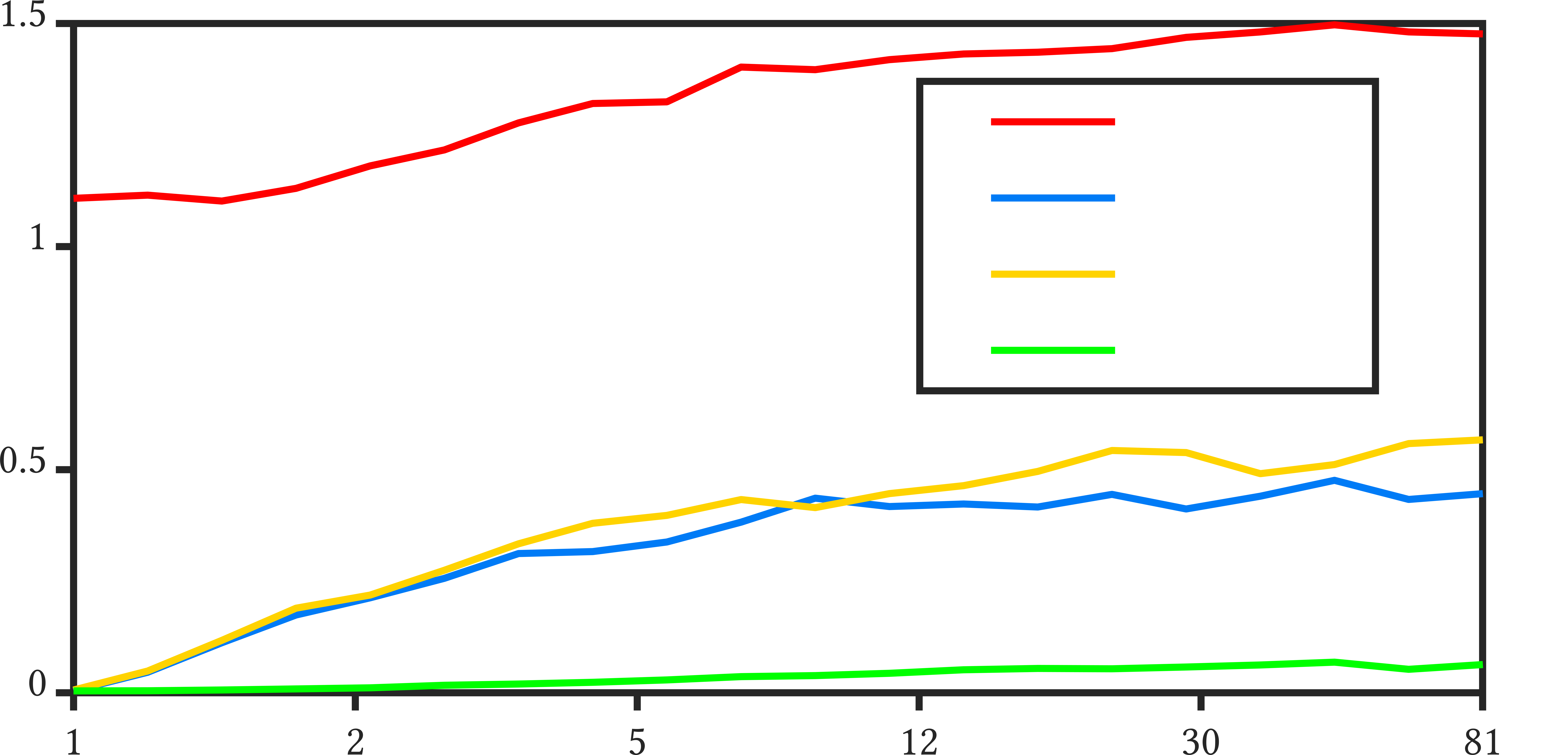}}}
\put(57, 0){Maximum Conformal Scale Factor}
\put(-5, 60){\rotatebox{90}{Error}}
\put(171, 100.5){ {\small 2D Conv}}
\put(175, 89.5){ {\small $\textrm{U}(1)$}}
\put(175, 78.5){ {\small $\mathbb{C}_{\neq 0}$}}
\put(178, 67.5){ {\small $\textrm{L}$}}
\end{picture}
    \caption{ The equivariance error plotted as a function of the maximum conformal scale factor. Notably, moving from $\textrm{U}(1)$ (rotations) to $\mathbb{C}_{\neq 0}$ (rotations and dilations) does not provide a benefit -- one must consider the full lower-triangular subgroup $\OPS{} \subset \SLC$ .        \label{equi_plot}
 }
\end{figure}

We empirically validate the equivariance of our framework by quantifying the degree to which our layers commute with \mob{} transformations of increasing area distortion. We consider a $32$-channel, $B=64$ band-limited MCResNet block with the equivariant residual connection removed to avoid bias. We control the area distortion of a \mob{} transformation $g$ by composing a series of random rotations and inversions so that the maximal scale factor over $\RS$ equals a fixed value \cite{baden2018mobius}. Denoting $\mathcal{R}$ as the mapping induced by passing features through the MCResNet layer, we follow \cite{de2020gauge, worrall2019deep, sosnovik2019scale} and define the equivariance error for a fixed maximum scale factor $\alpha \in \mathbb{R}_{\geq 0}$ as
\begin{align}
    \textrm{Error} = \frac{\textrm{E}\left( \, \mathcal{R}( \, g \psi \,) - g \, \mathcal{R}(\, \psi \,) \right)^2}{\textrm{Var}  \ g \, \mathcal{R}( \,\psi \,)} \quad \textrm{with} \quad \max_{z \in \RS} \scalef{g}{z} = \alpha,\label{equiv_error}
\end{align}
where $\textrm{E}$ and $\textrm{Var}$ denote the mean and variance computed over $100$ randomly initialized models, \mob{} transformations, and features. 

As a baseline, we compare our proposed approach against three other paradigms. In the first, we replace \mob{} convolution with a standard $5 \times 5$ convolution layer taking $\h_{\psi}$ as input; in the second, we restrict the transformation field to rotations so that $\T_{\feat{}}(z) \in \textrm{U}(1)$; in the third, we loosen the restriction to include dilations with $\T_{\feat{}}(z) \in \mathbb{C}_{\neq 0}$. We note that the second and third paradigms are isometry-equivariant, and that the latter is also equivariant to the conformal transformations of the (non-compactified) plane.

The results are shown in \Fig{equi_plot}, where the equivariance error in Equation~(\ref{equiv_error}) is plotted as a function of the maximum scale factor. The green curve is our proposed method with $\T_{\feat{}}(z) \in \OPS{}$. Using our method, the error stays very low, indicating that \mob{} convolution is approximately equivariant even in the presence of significant changes in scale ($\scalef{g}{z} \geq 12)$.  Notably, we see no improvement moving from $\T_{\feat{}}(z) \in \textrm{U}(1)$ to $\T_{\feat{}}(z) \in \mathbb{C}_{\neq 0}$, suggesting that rotations and dilations alone fail to well-characterize the local deformations induced by \mob{} transformations.

\subsection{Conformal Shape Classification}
\label{sec:classification}

\begin{table}[t]{
\caption{ Genus-zero shape classification. Several conformally deformed meshes from the SHREC '11 dataset \shortcite{lian2011} are shown below. \label{tab:shape_class}}
\begin{picture}(\columnwidth,0.55\columnwidth)
\put(7.5, 105){
    \begin{tabular}{llc}
    Method                           & \ Orig. Accuracy         & \ Conf. Accuracy \\ 
    \cline{1-3} \noalign{\vskip\doublerulesep \vskip-0.5\arrayrulewidth} \cline{1-3}
    \multicolumn{1}{l|}{MC (ours)}         & \multicolumn{1}{c|}{99.1\%}  & 86.5\%  \\
    \hline
    \multicolumn{1}{l|}{DiffusionNet \shortcite{sharp2020diffusion}}     & \multicolumn{1}{c|}{99.5\%}  & 64.9\%  \\
     \multicolumn{1}{l|}{FC \shortcite{Mitchel_2021_ICCV}}     & \multicolumn{1}{c|}{99.2\%}  & 40.7\%  \\
      \multicolumn{1}{l|}{CubeNet \shortcite{shakerinava2021equivariant}}     & \multicolumn{1}{c|}{47.1\%}  & 21.2\%  \\
    \end{tabular}
    }
    \put(34, 0){{\includegraphics[width=0.8\columnwidth]{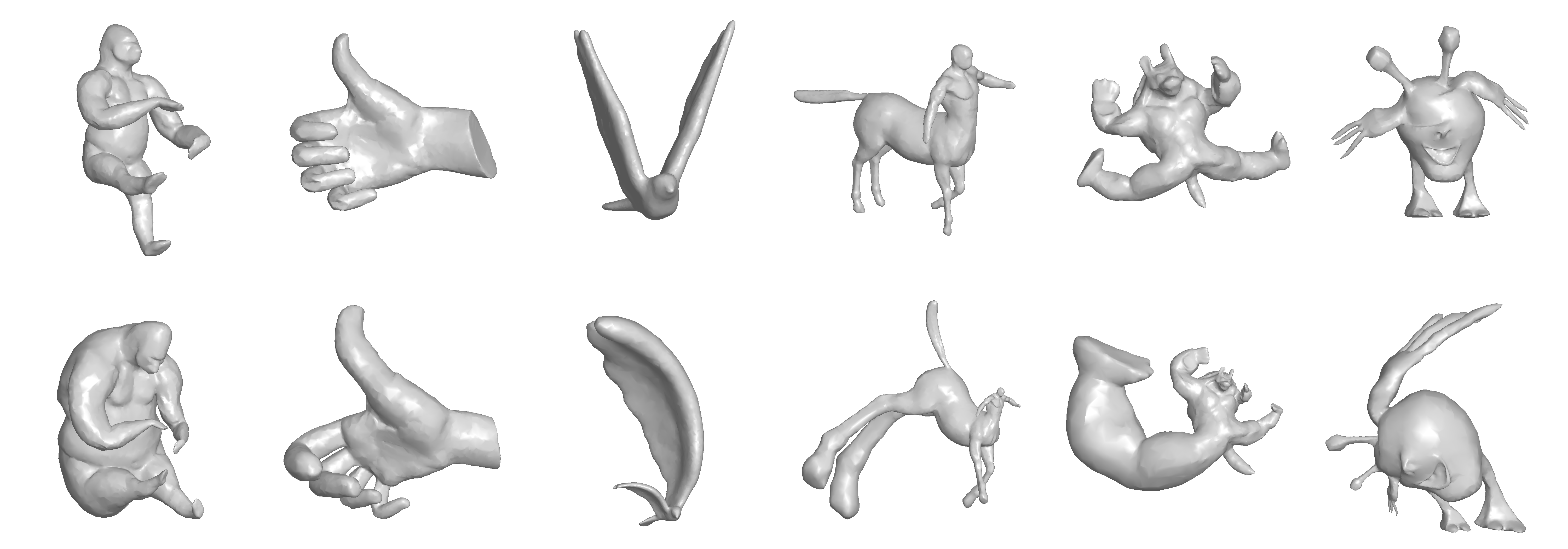}}}
    \put(10, 13){Conf.}
    \put(10, 50){Orig.}
     \end{picture} 
}
\end{table}

Next, we use \mob{} convolutions to classify genus-zero shapes. Over the last half-decade, the SHREC '11 dataset \cite{lian2011} has become a popular choice for evaluating network performance in shape classification tasks and several recent approaches have achieved near-perfect accuracy on the dataset \cite{wiersma2021deltaconv, Mitchel_2021_ICCV, milano2020primaldual, sharp2020diffusion}. However, shapes within each of the 30 categories in the dataset differ only by (approximate) isometric deformations. To better highlight the strengths of our approach, we extend the dataset to include deformations given by random conformal transformations with several examples shown above Table~\ref{tab:shape_class}.

To apply our framework, we conformally map each mesh to the sphere via mean curvature flow \cite{kazhdan2012can} and use a simple network consisting of a single $16$-channel, $B = 64$ band-limited MCResNet block followed by a global mean pool and a fully-connected layer to give predictions for the 30 shape categories.

We fit our network on both the original SHREC '11 dataset and our conformally-augmented version using 10 samples per class. For comparisons, we report the results of Field Convolutions (FC) \cite{Mitchel_2021_ICCV}  and DiffusionNet \cite{sharp2020diffusion} -- two state-of-the-art surface networks -- on the original dataset and train both networks on the conformally-augmented version. We also train CubeNet \cite{shakerinava2021equivariant}, a top performing rotation-equivariant spherical network, on both datasets. As inputs, each network takes the Heat Kernel Signature (HKS) \cite{sun2009concise} computed at $16$ different timescales; since the HKS isn't conformally-invariant, we use the values computed on the original meshes when training and testing on their conformally-augmented counterparts. 

Results are shown in Table~\ref{tab:shape_class} in the form of the mean classification accuracy over three randomly sampled test-train splits. Our simple \mob{} convolution network matches the state-of-the-art performance of FC and DiffusionNet on the original dataset and significantly outperforms both on the conformally augmented version, despite the fact that the transformations between the spherically-parameterized meshes aren't perfect \mob{} transformations. Like FC and DiffusionNet, our method is equivariant to isometric deformations of the meshes as they manifest as \mob{} transformations after parameterization to the sphere, serving to explain our strong performance on the original dataset. However, \mob{}-equivariance allows our rudimentary network to better account for conformal deformations between similar shapes and suggests that a new class of  \textit{conformally-equivariant} surface networks may outperform existing \textit{isometry-equivariant} networks in challenging shape analysis and recognition tasks. CubeNet performs poorly on both datasets, indicating that \mob{}-equivariance isn't easily learned in a rotation-equivariant framework.

\subsection{Omni-directional Image Segmentation}
\label{segmentation_section}
\begin{table}[t]{
    \caption{ Omni-directional image segmentation \label{segmentaion_results}}

    \begin{tabular}{llcc}
    Method                           & \ Accuracy          & \ IoU \\ \cline{1-3} \noalign{\vskip\doublerulesep \vskip-0.5\arrayrulewidth} \cline{1-3}
    \multicolumn{1}{l|}{MC (ours)}         & \multicolumn{1}{c|}{60.9\%}  & 43.3\%  &\rdelim\}{3}{1cm}[\, Spectral] \\
    \cline{1-3}
\multicolumn{1}{l|}{SWSCNN~\shortcite{esteves2020spin}}     & \multicolumn{1}{c|}{58.7\%}  & 43.4\%  \\
    \multicolumn{1}{l|}{SphCNN~\shortcite{esteves2018learning}}     & \multicolumn{1}{c|}{52.8\%}  & 40.2\%  \\
    \multicolumn{1}{l|}{}         & \multicolumn{1}{c|}{}  & \\[-10pt]
    \cdashline{1-3} 
    \multicolumn{1}{l|}{}         & \multicolumn{1}{c|}{}  & \\[-10pt]
    \multicolumn{1}{l|}{CubeNet~\shortcite{shakerinava2021equivariant}}         & \multicolumn{1}{c|}{62.5\%}  & 45.0\%  & \rdelim\}{3}{1cm}[\, Spatial]\\
    \multicolumn{1}{l|}{HexNet~\shortcite{zhang2019orientation}}         & \multicolumn{1}{c|}{58.6\%}  & 43.3\% \\
    \multicolumn{1}{l|}{UGSCNN~\shortcite{jiang2018spherical}} & \multicolumn{1}{c|}{54.7\%} &38.3\% 
    \end{tabular}
     
    }
\end{table}

Last, we demonstrate the utility of \mob{} convolutions by moving from geometry to image processing, where we apply them to semantically segment omni-directional images from the Stanford 2D3DS dataset \cite{armeni2017joint}. Here, we use MCResNet blocks to construct a U-Net \cite{ronneberger2015unet} architecture with $32, 64, 128, 256, 128, 64, 32$ channels per layer, applying max pooling or nearest neighboring upsampling before each increase or decrease in channel width. As with other state-of-the-art equivariant spherical networks \cite{esteves2020spin, shakerinava2021equivariant}, we find our method performs best as a feature extractor for a small network of standard convolutional layers due to the consistent latitudinal orientation of the images; we append six $3 \times 3$ 2D convolutions to the end of our network to predict labels.  To measure performance, we report the mean per-class segmentation accuracy and intersection over union (IoU) averaged over the three official folds.

Results are shown in Table~\ref{segmentaion_results}, and we attain performance comparable to the state-of-the-art. Existing spherical networks compute convolutions either in the spatial domain \cite{shakerinava2021equivariant, zhang2019orientation, jiang2018spherical} or, like our method, in the spectral domain via expansions in spherical basis functions. In the latter case, efficiency and scalable filter support come at the cost of fidelity, as some high-frequency information is lost when computing the forward SHT due to the fixed band-limit assumption. This puts spectral methods at a disadvantage in precision labeling tasks with a large class imbalance, where spectral aliasing can blur sharp boundaries and over-smooth localized features necessary to make accurate predictions. Compounded by the devaluation of equivariance due to the consistent orientation of the images, this is a challenging task for our framework. However, we outperform existing rotation-equivariant spectral approaches, demonstrating that we are able to achieve \mob{}-equivariance  without sacrificing descriptiveness. A visualization of our results and further discussion are provided in the supplement (\S\ref{a:seg}).

\section{Conclusion}
\label{s:conclusion}
We present a novel, \mob{}-equivariant spherical convolution operator which we call \textit{\mob{} convolution}. With it, we develop the foundations for \mob{}-equivariant spherical CNNs and demonstrate the utility of this framework by achieving strong results in both geometry and spherical-image processing tasks.   

More generally, this work represents an effort to move both image and surface convolutional neural networks beyond standard rotation- and isometry-equivariance and into the realm of conformal-equivariance. In particular, our experiments suggest that the latter transition may be especially relevant in the context of shape analysis and recognition and we hope this work serves to catalyze the development of a new generation of surface networks better able to handle the kinds of complex deformations found in real-world shape data.

\bibliographystyle{ACM-Reference-Format}
\bibliography{main}

\appendix

\section{Integration and Differentiation}
\label{a:intdiff}

We view $\RS{}$ as a Riemannian manifold under the round metric $s$, expressed at $z = z_1 + i \, z_2 \in \RS$ as
\begin{align}
s_{z} = \frac{4}{\left(1 + \lvert z \rvert^2 \right)^2} \, (dz_1^2 + dz_2^2). \label{round_metric}
\end{align}

\subsection{Integration}
The area element is 
\begin{align}
    dz = \frac{4 \, dz_1 \, dz_2}{\left(1 + \lvert z \rvert^2 \right)^2}, \label{area_element}
\end{align}
and for $g=\left[\begin{smallmatrix}a&b\\c&d\end{smallmatrix}\right]\in\SLC$, the scale factor $\scalef{g}{z}$ associated with the change of variables 
\begin{align}
\begin{aligned}
    z & \mapsto gz \\
    dz &\mapsto \scalef{g}{z} \, dz
\end{aligned} \label{c_of_v}
\end{align}
is given by \cite{carmeli2000group}:
\begin{align}
    \scalef{g}{z} = \frac{(1 + \lvert z \rvert^2 )^2}{(1 + \lvert g z \rvert^2)^2 \, \lvert c z + d \rvert^4} = \frac{(1 + \lvert z \rvert^2 )^2}{\left( \, \lvert a z + b \rvert^2 + \lvert c z + d \rvert^2 \right)^2}. \label{scale_factor}
\end{align}

\subsection{Differentiation}
For any $\feat{} \in L^2(\RS, \mathbb{R})$, we express the differential of $\feat{}$ with respect to the complex variable $z = z_1 + i \, z_2$ as the complex number
\begin{align}
    d \, \feat{} = \frac{1}{2}\left( \frac{\partial \feat{}}{\partial z_1} - i \, \frac{\partial \feat{}}{\partial z_2}\right). \label{diff}
\end{align}
Similarly, we can define the Hesssian of $\feat{}$ as a complex number whose coefficients are related to the covariant Hessian. The terms depend on the first and second partial derivatives of $\feat{}$ in addition to the Christoffel symbols corresponding to the round metric. At the origin, the terms depending on the first derivatives and Christoffel symbols vanish, and the Hessian becomes
\begin{align}
    \nabla d \, \feat{}\big|_0 = \frac{1}{4} \eval{\left( \frac{\partial^2 \feat{}}{d z_1^2} - \frac{\partial^2 \feat{}}{\partial z_2^2} - 2 i \, \frac{\partial^2 \feat{}}{\partial z_1 \partial z_2} \, \right)}{0}. \label{hess}
\end{align}
Given an element $g=\left[\begin{smallmatrix} a & b \\ c & d \end{smallmatrix}\right]\in \SLC$, the differential of $g$ at $x \in \RS$ and the Hessian of $g$ at the origin are the complex numbers
\begin{align}
    d \, g\big|_x = \frac{1}{(cx + d)^2} \qquad \textrm{and} \qquad  \nabla d \, g\big|_0 = -\frac{2c}{d^3}. \label{g_diffs}
\end{align}

\section{Equivariance of \mob{} Convolutions}
\label{a:equivariance}
Here we provide detailed proofs of the claims made in Section~\ref{s:conv} regarding sufficient conditions for \mob{}-equivariance and the construction of the frame and density operators.

\subsection{Conditions for Equivariance}
Suppose we are given a frame operator $\T$ and a density operator $\h$ taking a real-valued function and returning a frame-field and density, respectively
\begin{align*}
\begin{aligned}
\T:L^2(\RS,\R) &\rightarrow L^2(\RS, \, \textrm{L}) \\
\feat{} & \mapsto \T_{\feat{}}
\end{aligned}
\quad \textrm{and} \quad 
\begin{aligned}
\h: L^2(\RS,\R) &\rightarrow L^2(\RS,\R) \\
\feat{} & \mapsto \h_{\feat{}}
\end{aligned}.
\end{align*}

\begin{claim}
If for all $\feat{} \in L^2(\RS, \mathbb{R})$ and $g \in \textup{SL(2, \ensuremath{\mathbb{C}})}$, the operators $\T$ and $\h$ satisfy the condition of Equation~(\ref{hT_cond})
\begin{align*}
\lnxf{g}{z} \, \T_{\feat{}}(z) = \T_{g \, \feat{}}(gz)
\qquad \textrm{and} \qquad 
\lambda_g^{-2}(z) \, \h_{\feat{}}(z) = \h_{g \, \feat{}}(gz),
\end{align*} 
then for any filter $f \in L^2(\RS, \mathbb{R})$,
\begin{align*}
   g \, (\feat{} * f) = (g \, \feat{} * f). 
\end{align*}
\end{claim}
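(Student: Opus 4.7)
The plan is to unwind both sides of the desired equality using the definition of Möbius convolution in Equation~(\ref{m_conv}), and then show that a single change of variables together with the two conditions in Equation~(\ref{hT_cond}) turns one into the other. Writing out $g(\feat * f)(y) = (\feat * f)(g^{-1}y)$ from the standard left action, the goal is to massage
$$(g\feat * f)(y) = \int_{\RS} \h_{g\feat}(z) \, \bigl[\T_{g\feat}(z)\, f\bigr]\bigl(\log_z y\bigr)\, dz$$
into exactly the expression for $(\feat * f)(g^{-1}y)$.

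The first step I would perform is the substitution $z = gw$ in the integral above. By the definition of the conformal scale factor in Equation~(\ref{c_of_v}), this introduces a factor of $\lambda_g^2(w)$ in the measure. The condition $\lambda_g^{-2}(z)\h_{\feat}(z) = \h_{g\feat}(gz)$ on the density operator is engineered precisely so that $\h_{g\feat}(gw)\cdot\lambda_g^2(w)$ collapses to $\h_{\feat}(w)$, eliminating the Jacobian. So after step one, the outer factor $\h$ is already in the correct form; what remains is to show that the inner evaluation of the transformed filter matches $[\T_{\feat}(w)f](\log_w(g^{-1}y))$.

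The second step is where the frame-operator condition enters. Using $\T_{g\feat}(gw) = D_w^g\,\T_{\feat}(w)$, the inner expression becomes
$$\bigl[D_w^g\,\T_{\feat}(w)\,f\bigr]\bigl(\log_{gw} y\bigr) = f\Bigl(\bigl[\T_{\feat}(w)\bigr]^{-1}\bigl[D_w^g\bigr]^{-1}\log_{gw} y\Bigr),$$
by the definition of the left action on filters. Now I invoke the definition $D_w^g = \log_{gw}\circ g\circ \exp_w$ from Equation~(\ref{param_xform}); inverting gives $[D_w^g]^{-1} = \log_w\circ g^{-1}\circ\exp_{gw}$, and therefore $[D_w^g]^{-1}\log_{gw} y = \log_w(g^{-1}y)$ since $\exp_{gw}$ cancels $\log_{gw}$. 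This identifies the inner expression with $[\T_{\feat}(w)\,f](\log_w(g^{-1}y))$, and the integral over $w$ becomes precisely $(\feat * f)(g^{-1}y) = g(\feat * f)(y)$, completing the argument.

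I expect the only delicate point to be the bookkeeping in step two: one must be comfortable reading $D_w^g$ as a genuine map $\RS_w \to \RS_{gw}$, inverting it correctly (the inverse is \emph{not} $D_{gw}^{g^{-1}}$ written naively, though it equals it under the identifications), and recognizing that both $\T_{\feat}(w)$ and $D_w^g$ live in $\OPS{}$ so that left multiplication and composition of actions behave consistently. Once those conventions are fixed, the computation is entirely mechanical and requires no additional analytic input beyond the change-of-variables formula in Equation~(\ref{scale_factor}). I would also briefly note at the end that the point is exactly what the figure's two operators depict: the $\h$ condition absorbs the area-measure distortion, while the $\T$ condition absorbs the tangent-frame distortion $D_w^g$.
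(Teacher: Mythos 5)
Your proposal is correct and follows essentially the same route as the paper's proof, just run in the reverse direction: the paper starts from $(\feat{} * f)(g^{-1}y)$ and uses the frame condition together with the definition of $\lnxf{g}{z}$ to rewrite the inner filter evaluation, then applies the density condition and the change of variables $z' = gz$ to arrive at $(g\,\feat{} * f)(y)$, whereas you start from $(g\,\feat{} * f)(y)$ and substitute $z = gw$ first. The ingredients -- the two conditions of Equation~(\ref{hT_cond}), the inversion of $\lnxf{g}{z}$ via Equation~(\ref{param_xform}), and the Jacobian from Equation~(\ref{c_of_v}) -- are identical, so no further comparison is needed.
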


\begin{proof}
Suppose $\T$ and $\h$ satisfy the condition and consider any $\feat{} \in L^2(\RS, \mathbb{R})$ and $g \in \SLC$. For any filter $f \in L^2(\RS, \mathbb{R})$, we can relate the expression of the filter over $\RS_{z}$ to the expression of the filter over $\RS_{gz}$:
\begin{align}
\nonumber
\big[\T_{\feat{}}(z)  \, f \big] \circ \log_z 
& \stackrel{\phantom{(\ref{hT_cond})}}{=} f \circ \big[\T_{\feat{}}(z)\big]^{-1} \, \log_z \\
\nonumber
&\stackrel{(\ref{hT_cond})}{=} f \circ \big[\T_{g \, \feat{}}(gz)\big]^{-1} \, \lnxf{g}{z} \, \log_z  \\
\nonumber
&  \stackrel{\stackrel{\phantom{(\ref{hT_cond})}}{(\ref{param_xform})}}{=} f \circ \big[\T_{g \, \feat{}}(gz)\big]^{-1} \, \log_{gz} \, g  \\
\label{framed_filter}
&\stackrel{\phantom{(\ref{hT_cond})}}{=} \big[\T_{ g \, \feat{}}(gz)  \, f \big] \circ \log_{gz} \, g
\end{align}
Using the relationship between the expression of the filters over $\RS_z$ and $\RS_{gz}$ it follows that for any $y \in \RS$,
\begin{align*}
    (\feat{} * f)( g^{-1} y) &  \stackrel{(\ref{m_conv})}{=} \int_{\RS} \h_{\feat{}}(z) \,
    \big[\T_{\feat{}}(z)  \, f \big]\big( \log_z g^{-1} y \big) \ dz \\
    & \stackrel{(\ref{framed_filter})}{=} \int_{\RS} \h_{\feat{}}(z) \,
    \big[\T_{ g \, \feat{}}(gz)  \, f \big]\big( \log_{gz} y \big) \ dz \\
    & \stackrel{(\ref{hT_cond})}{=} \int_{\RS} \scalef{g}{z} \, \h_{g \, \feat{}}(g z) \,
    \big[\T_{ g \, \feat{}}(gz)  \, f \big]\big( \log_{gz} y \big) \ dz \\
    & \stackrel{\phantom{(\ref{hT_cond})}}{=} \int_{\RS}  \h_{g \, \feat{}}(z') \,
    \big[\T_{ g \, \feat{}}(z')  \, f \big]\big( \log_{z'} y \big) \ dz' \\
    &  \stackrel{(\ref{m_conv})}{=} (g \, \feat{} * f)(y),
\end{align*}
where the second to last equality follows from the change of variables in Equation~(\ref{c_of_v}).
\end{proof}

\subsection{Construction of Operators}

\begin{claim}
If $\T$ is defined as in Equation~(\ref{deriv_frame})
\begin{align*}
    \T_{\psi}(x) &\equiv \begin{bmatrix} \left( d \log_{x}\feat{} \big\vert_{0}\right)^{-\frac{1}{2}}  & 0 \\
    \left(\frac{1}{2} \nabla d  \log_{x} \feat{} \big\vert_{0}\right) \left( d \log_{x}\feat{}  \big\vert_{0}\right)^{-\frac{3}{2}}& \left( d \log_{x}\feat{} \big\vert_{0}\right)^{\frac{1}{2}}\end{bmatrix}
\end{align*}
then the condition for the frame operator in Equation~(\ref{hT_cond}) is satisfied.
\end{claim}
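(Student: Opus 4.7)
The plan is to reduce the equivariance identity $\lnxf{g}{z}\,\T_\feat(z) = \T_{g\feat}(gz)$ to a chain-rule computation at the basepoint of the relevant frame. The key observation is a pullback formula for the generalized logarithm: since $g\feat = \feat\circ g^{-1}$ and $\feat = (\log_z\feat)\circ\log_z$, and since Equation~(\ref{param_xform}) gives $(\lnxf{g}{z})^{-1} = \log_z\circ\, g^{-1}\circ\exp_{gz}$, one obtains
\begin{equation*}
\log_{gz}(g\feat) \;=\; \feat\circ g^{-1}\circ\exp_{gz} \;=\; (\log_z\feat)\circ(\lnxf{g}{z})^{-1}.
\end{equation*}
Hence the local expansion of $g\feat$ in the frame at $gz$ is obtained from that of $\feat$ in the frame at $z$ by precomposition with the lower-triangular holomorphic M\"obius map $(\lnxf{g}{z})^{-1}$, which fixes the origin.

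Next, I would parametrize $L=\lnxf{g}{z}=\left[\begin{smallmatrix}\alpha & 0\\ \beta & 1/\alpha\end{smallmatrix}\right]$, so that $L^{-1}=\left[\begin{smallmatrix}1/\alpha & 0\\ -\beta & \alpha\end{smallmatrix}\right]$. Specializing Equation~(\ref{g_diffs}) to $L^{-1}$ gives $d\,L^{-1}|_0 = 1/\alpha^2$ and $\nabla d\,L^{-1}|_0 = 2\beta/\alpha^3$. Since $L^{-1}$ is holomorphic and fixes $0$, the Wirtinger chain rule applied to the real-valued $F=\log_z\feat$ (with every $\partial/\partial\bar z$ cross-term dropping out) yields
\begin{align*}
d(F\circ L^{-1})|_0 &\;=\; dF|_0\cdot dL^{-1}|_0, \\
\nabla d(F\circ L^{-1})|_0 &\;=\; \nabla dF|_0\cdot (dL^{-1}|_0)^2 + dF|_0\cdot \nabla dL^{-1}|_0.
\end{align*}
Setting $A = d(\log_z\feat)|_0$ and $B = \nabla d(\log_z\feat)|_0$, these evaluate to $d(\log_{gz}(g\feat))|_0 = A/\alpha^2$ and $\nabla d(\log_{gz}(g\feat))|_0 = B/\alpha^4 + 2A\beta/\alpha^3$.

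Plugging these into Equation~(\ref{deriv_frame}) gives
\begin{equation*}
\T_{g\feat}(gz) \;=\; \begin{bmatrix}\alpha\,A^{-1/2} & 0\\ \beta\,A^{-1/2}+\tfrac{1}{2\alpha}\,B\,A^{-3/2} & \alpha^{-1}A^{1/2}\end{bmatrix},
\end{equation*}
and a direct $2\times 2$ matrix product identifies this with $L\,\T_\feat(z)$, which is the desired identity.

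The only delicate step is matching the paper's complex Hessian in Equation~(\ref{hess}) with the Wirtinger quantity $\partial^2/\partial z^2$ at the origin, so that the chain-rule formulas above are literally correct. This rests on the facts that the covariant Hessian for the round metric reduces at the basepoint to the Euclidean combination appearing in Equation~(\ref{hess}), and that holomorphicity of $L^{-1}$ kills every $\partial/\partial\bar z$ term in the second-order chain rule. Once that bookkeeping is settled, the remainder of the argument is essentially an algebraic verification.
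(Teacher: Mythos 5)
Your proposal is correct and follows essentially the same route as the paper's proof: both reduce the identity to the pullback relation $\log_{gz}(g\feat)=(\log_z\feat)\circ(\lnxf{g}{z})^{-1}$, apply the first- and second-order chain rule at the origin using Equation~(\ref{g_diffs}) for $(\lnxf{g}{z})^{-1}$, and verify the resulting $2\times2$ matrix identity (the paper checks the entries one by one rather than via a single product, and leaves the Wirtinger/holomorphicity bookkeeping implicit, which you spell out).
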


\begin{proof}
For any $\feat{} \in L^2(\RS, \mathbb{R})$ and $g \in \SLC$ it follows from Equation~(\ref{param_xform}) that 
\begin{align*}
\log_{gx}  g  \, \feat{} = \lnxf{g}{x} \log_{x} \feat{}.
\end{align*}
Denoting $\lnxf{g}{x} = \left[\begin{smallmatrix} a & 0 \\ n & a^{-1} \end{smallmatrix} \right]$, applying the chain rule and evaluating at the origin using Equation~(\ref{g_diffs}) gives
\begin{align}
    \big[ d \log_{gx} g \, \feat{} \big] \big\vert_{0} & = a^{-2}  \big[ d \log_{x}\feat{} \big] \big\vert_{0}, \label{chain_diff}
\end{align}
and
\begin{align}
    \big[\nabla d  \log_{gx} g \, \feat{} \big] \big\vert_{0} & = a^{-4} \big[\nabla d  \log_{x} \feat{} \big] \big\vert_{0} + 2na^{-3}  \big[ d \log_{x}\feat{} \big] \big\vert_{0}. \label{chain_hess}
\end{align}
The upper diagonal element of $\T_{g \, \feat{}}(gx)$ is given by
\begin{align}
    \big[\T_{g \, \feat{}}(gx)\big]_{11} & \stackrel{(\ref{deriv_frame})}{=} \left(\big[ d \log_{gx} g \, \feat{} \big] \big\vert_{0}\right)^{-\frac{1}{2}} \nonumber \\
    &\stackrel{(\ref{chain_diff})}{=} a  \left(\big[ d \log_{x}\feat{} \big] \big\vert_{0}\right)^{-\frac{1}{2}} \nonumber \\
    & \stackrel{(\ref{deriv_frame})}{=} a \big[\T_{\feat{}}(x)\big]_{11}, \nonumber 
\end{align}
A similar argument shows that the lower diagonal element satisfies $\big[\T_{g \, \feat{}}(gx)\big]_{22} = a^{-1} \big[\T_{\feat{}}(x)\big]_{22}$. For the off-diagonal element have
\begin{align*} 
 \big[\T_{g \, \feat{}}(gx)\big]_{21} & \stackrel{(\ref{deriv_frame})}{=} \frac{1}{2} \big[\nabla d  \log_{gx} g \, \feat{} \big] \big\vert_{0} \left(\big[ d \log_{gx} g \, \feat{} \big] \big\vert_{0}\right)^{-\frac{3}{2}} \nonumber \\
 & \stackrel{(\ref{chain_hess})}{=}  n \left(\big[ d \log_{x}\feat{} \big] \big\vert_{0}\right)^{-\frac{1}{2}}  +  \\
 & \phantom{\stackrel{(\ref{chain_hess})}{=}}  \ \ a^{-1} \frac{1}{2} \big[\nabla d  \log_{x} \feat{} \big] \big\vert_{0} \left(\big[ d \log_{x}\feat{} \big] \big\vert_{0}\right)^{-\frac{3}{2}} \nonumber \\
  & \stackrel{(\ref{deriv_frame})}{=} n \big[\T_{\feat{}}(x)\big]_{11} +  a^{-1} \big[\T_{\feat{}}(x)\big]_{21} \nonumber 
\end{align*}
from which it follows that 
$$
\lnxf{g}{x} \T_{\feat{}}(x) = \T_{g \, \feat{}}(gx)
$$
as desired.
\end{proof}

\begin{claim}
If $\h$ is defined as in Equation~(\ref{deriv_density})
\begin{align*}
    \h_{\psi}(x) &\equiv \abs[\Big]{ \, d \log_{x}\feat{} \big\vert_{0} \, }^{\, 2}
\end{align*}
then the condition for the density operator in Equation~(\ref{hT_cond}) is satisfied. 
\end{claim}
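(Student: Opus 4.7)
The plan is to mirror the pattern of the previous claim but much more briefly, since only the top-left entry of $\lnxf{g}{x}$ ends up contributing. First I would recall the chain-rule identity already established in Equation~(\ref{chain_diff}): writing $\lnxf{g}{x} = \left[\begin{smallmatrix} a & 0 \\ n & a^{-1}\end{smallmatrix}\right]$, we have $\left[d\log_{gx} g\,\feat\right]\big|_{0} = a^{-2}\left[d\log_{x}\feat\right]\big|_{0}$. Taking squared moduli and invoking the definition (\ref{deriv_density}) immediately yields $\h_{g\,\feat}(gx) = |a|^{-4}\,\h_{\feat}(x)$.

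The remaining step is to identify $|a|^{-4}$ with $\scalef{g}{x}^{-1}$. For this I would argue that $\log_{x}$ and $\log_{gx}$ lie in $\SUTWO$ and hence act as isometries of the round metric, contributing unit scale factor. By multiplicativity of the scale factor under composition, this gives $\lambda_{\lnxf{g}{x}}^{2}(0) = \scalef{g}{x}$. Applying the closed-form expression (\ref{scale_factor}) to the lower-triangular matrix $\lnxf{g}{x}$ at the point $z=0$ produces $\lambda_{\lnxf{g}{x}}^{2}(0) = 1/|a^{-1}|^{4} = |a|^{4}$, so $\scalef{g}{x} = |a|^{4}$. Combining the two observations gives $\h_{g\,\feat}(gx) = \scalef{g}{x}^{-1}\,\h_{\feat}(x)$, which is the density condition in Equation~(\ref{hT_cond}).

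No real obstacle arises: the claim reduces to the same chain-rule calculation already performed for $\T$ together with a direct evaluation of $\scalef{\lnxf{g}{x}}{0}$. The only mild subtlety worth flagging in the written proof is the justification that the two $\SUTWO$ factors composing $\lnxf{g}{x}$ with $g$ are genuine isometries of the round metric on $\RS$ and therefore produce no area distortion, so that the scale factor of $\lnxf{g}{x}$ at the origin coincides with that of $g$ at $x$.
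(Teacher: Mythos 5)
Your proof is correct, and its overall skeleton matches the paper's: both reduce the claim to the chain-rule identity $d\log_{gx}g\,\feat\big\vert_0 = \big(d\lnxf{g}{x}\big\vert_0\big)^{-2}\cdot$\ldots, i.e.\ to showing that $\abs[]{\,d\lnxf{g}{x}\big\vert_0\,}^{2} = \scalef{g}{x}$. Where you diverge is in how that identity is established. The paper never invokes the lower-triangular parameterization here; it expands $\lnxf{g}{x}=\log_{gx}\circ\, g\circ\exp_x$, computes $\abs[]{\,d\exp_x\big\vert_0\,}=1+\abs[]{x}^2$ and $\abs[]{\,dg\big\vert_x\,}=\abs[]{cx+d}^{-2}$ explicitly, and matches the product against the closed form of $\scalef{g}{x}$ in Equation~(\ref{scale_factor}). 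You instead reuse Equation~(\ref{chain_diff}) from the frame-operator proof to get the factor $\abs[]{a}^{-4}$, and then identify $\abs[]{a}^{4}$ with $\scalef{g}{x}$ by (i) the cocycle property $\lambda_{h\circ g}^2(z)=\lambda_h^2(gz)\,\lambda_g^2(z)$, (ii) the fact that $\SUTWO$ acts isometrically so $\log_x$ and $\log_{gx}$ contribute unit factors, and (iii) evaluating Equation~(\ref{scale_factor}) at $z=0$ for a lower-triangular matrix. This is a legitimate and arguably more conceptual route: it avoids recomputing any differentials and makes clear that the whole claim hinges on $\lambda^2_{\lnxf{g}{x}}(0)=\scalef{g}{x}$. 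The cost is that it leans on the multiplicativity of the conformal scale factor and the isometry of $\SUTWO$, neither of which is stated explicitly in the paper (though both follow in one line from Equation~(\ref{scale_factor})); you are right to flag that these need a sentence of justification in the written version.
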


\begin{proof}
From Equations~(\ref{gen_log})~and~(\ref{g_diffs}), the differential of the generalized exponential $\exp_{x} = \log_{x}^{-1}$ at the origin is given by
\begin{align*}
    d \exp_{x}\big|_0 = \frac{\abs[]{c}^2 (1 + \abs[]{x}^2)}{c^2},
\end{align*}
from which it follows that 
\begin{align}
\abs[\Big]{\, d \exp_{x}\big|_0 \, } = (1 + \abs[]{x}^2) \label{dlog_abs}
\end{align}
for any choice of $c \in \mathbb{C}$. Then, for any $g \in \SLC$ applying the chain rule to the definition of  $\lnxf{g}{x}$ in Equation~(\ref{param_xform}) gives
\begin{align}
   \abs[\Big]{ \, d \lnxf{g}{x}\big\vert_{0} \, }^{\, 2} & \stackrel{(\ref{param_xform})}{=}  \abs[\Big]{ \, d \exp_{x} \big\vert_{0} \, }^{\, 2} \  \abs[\Big]{ \, d g\big\vert_{x} \, }^{ \, 2} \ \abs[\Big]{ \, d \log_{gx}\big\vert_{gx} \, }^{\, 2} \nonumber \\
   & \stackrel{\phantom{(\ref{param_xform})}}{=} \abs[\Big]{ \, d \exp_{x} \big\vert_{0} \, }^{\, 2}  \ \abs[\Big]{ \, d g \big\vert_{x} \, }^{ \, 2} \ \abs[\Big]{ \, d \exp_{gx} \big\vert_{0} \, }^{ \, -2}  \nonumber \\
  & \stackrel{(\ref{dlog_abs})}{=} (1 + \abs[]{x}^2)^2 \, \left(\frac{1}{\abs[]{cx + d}^4}\right) \,\left(\frac{1}{(1 + \abs[]{gx}^2)^2}\right) \nonumber \\
  &\stackrel{\phantom{(\ref{dlog_abs})}}{=} \frac{(1 + \abs[]{x}^2)^2}{(1 + \abs[]{gx}^2)^2 \abs[]{cx + d}^4} \nonumber \\
  & \stackrel{(\ref{scale_factor})}{=} \ \scalef{g}{x} \label{diff_Dgz},
\end{align}
where the second equality follows from the fact that $\log_z$ is an isometry of $\RS$ with $\log_z^{-1} = \exp_z$.

It follows that for any $\feat{} \in L^2(\RS, \mathbb{R})$ and $g \in \SLC$, 
\begin{align*}
    \h_{g \, \feat{}}(gx) & \stackrel{(\ref{deriv_density})}{=} \abs[\Big]{ \,  d \log_{gx} g \, \feat{} \big\vert_{0} \, }^{\, 2} \\
    &  \stackrel{(\ref{param_xform})}{=} \abs[\Big]{ \, d \lnxf{g}{x} \log_x  \feat{} \big\vert_{0} \, }^{\, 2} \\
    &  \stackrel{\phantom{(\ref{param_xform})}}{=} \abs[\Big]{ \, d \log_x  \feat{}\circ\left(\lnxf{g}{x}\right)^{-1} \Big\vert_{0} \, }^{\, 2} \\
    & \stackrel{\phantom{(\ref{param_xform})}}{=} \abs[\Big]{ \, d \lnxf{g}{x} \big\vert_{0} \, }^{\, -2} \ \abs[\Big]{ \, d \log_x  \feat{} \big\vert_{0} \, }^{\, 2} \\ \nonumber 
    & \stackrel{(\ref{diff_Dgz})}{=} \lambda_{g}^{-2}(x) \ \abs[\Big]{ \,  d \log_x  \feat{} \big\vert_{0} \, }^{\, 2} \\
    & \stackrel{(\ref{deriv_density})}{=} \lambda_{g}^{-2}(x) \, \h_{\psi}(x),
\end{align*}
as desired.
\end{proof}

\section{Transformation of Functions}
\label{a:filter_xform}
Given a filter $f$ parameterized as in Equation~(\ref{filter_param}),
\begin{align*}
    f = \sum_{m=-M}^M\sum_{s=-N}^N  b_{ms} \, \fB_{ms}^t,
\end{align*}
where $\fB_{ms}^t$ are the spherical log-polar functions
\begin{align*}
    \fB_{ms}^t(z) \equiv \frac{|z|^{is}}{|z|^t} \left(\frac{z}{|z|}\right)^m, 
\end{align*}
we derive the expansion of the transformation of the filter by a lower triangular matrix $L = \left[\begin{smallmatrix} a & 0 \\ n & a^{-1}\end{smallmatrix}\right] \in \OPS{} \subset \SLC$ given in Equation~(\ref{true_form}):
\begin{align*}
    L \, f = \sum_{m = -\infty}^{\infty} \int_{-\infty}^{\infty}  \sum_{j=1}^3 \prescript{}{j}{\LB}_{ms}^{t\sigma_j}\big(L,{\mathbf b}\big)  \, \fB_{ms}^{\sigma_j} \, ds. 
\end{align*}
Noting that $\prescript{}{j}{\LB}_{ms}^{t\sigma_j}\big(L,{\mathbf b}\big)$ is a linear function in $\mathbf b$, we first derive an expansion for the transformation of the log-polar bases $L \,  \fB_{ms}^t$. Then, we substitute this expression into the filter parameterization to recover $ \prescript{}{j}{\LB}_{ms}^{t\sigma_j}\big(L,{\mathbf b}\big)$. Afterwards we discuss how we approximate the expansion in practice as in Equation~(\ref{approx}).

\subsection{Transformation of Spherical Log-Polar Bases}
Here we derive an expansion of the transformation of the spherical log-polar bases $\fB_{ms}^t$ by a lower triangular matrix $L \in \OPS{}$. Specifically, we seek an expansion which expresses $L \, \fB_{ms}^t$ as a linear combination of log-polar bases depending on $z$, indexed in angular and radial frequencies $u$ and $\omega$ and localization variables $\sigma$ --$\fB_{u\omega}^\sigma$ -- with a set of coefficient functions  depending \textit{only} on $L$.

We consider elements $L \in \OPS{} \subset \SLC$ of the form 
\begin{align}
L = \begin{bmatrix} a & 0 \\ n & a^{-1} \end{bmatrix} \qquad \textrm{with} \qquad a, n \in \mathbb{C}, \ a \neq 0. \label{L_param}
\end{align}
We treat separately the cases where $n = 0$ and $n \neq 0$, first finding an expansion of $L \, \fB_{ms}^t$ for each and afterwards combining the two to form an expansion of $L \, \fB_{ms}^t$ which holds for all $L \in \OPS{}$.
\newline
\newline
\noindent \textbf{Case 1 $(n = 0$) :}
If $n = 0$, then for any $z \in \RS$
\begin{align*}
L^{-1} z = a^{-2} z,
\end{align*}
and directly evaluating $\big[L \, \fB_{ms}^t\big](z) = \fB_{ms}^t\big(L^{-1}z\big)$ gives
\begin{align}
   \big[L \, \fB_{ms}^t\big](z) &\stackrel{(\ref{log_polar})}{=} \frac{|a^{-2}z|^{is}}{|a^{-2}z|^t} \left(\frac{a^{-2}z}{|a^{-2}z|}\right)^m  \nonumber \\
    &\stackrel{\phantom{(\ref{log_polar})}}{=}  \frac{|a^{-2}|^{is}}{|a^{-2}|^t}  \frac{|z|^{is}}{|z|^t} \left(\frac{a^{-2}}{|a^{-2}|}\right)^m \left(\frac{z}{|z|}\right)^m  \nonumber \\
    &\stackrel{\phantom{(\ref{log_polar})}}{=}  \frac{|a^2|^{-is}}{|a^2|^{-t}} \left(\frac{a^2}{|a^2|}\right)^{-m}  \frac{|z|^{is}}{|z|^t}  \left(\frac{z}{|z|}\right)^m \nonumber \\
    &\stackrel{(\ref{log_polar})}{=} \fB_{-m-s}^{-t}(a^2) \, \fB_{ms}^t(z) \label{case_1_expansion},
\end{align}
where the last equality provides the desired expansion.
\newline
\newline
\noindent \textbf{Case 2 $(n \neq 0)$}: Here, finding an expansion for $L \, \fB_{ms}^t$ is significantly more involved as $L^{-1}$ acts projectively on $\RS$. Specifically,
\begin{align*}
    L^{-1} z = \frac{a^{-1} z}{a - nz} = \frac{z}{a^2 - anz},
\end{align*}
which does not allow for a straight-forward separation of variables as in Equation~(\ref{case_1_expansion}).  Instead, we begin by making two observations. First, denoting $J = \left[\begin{smallmatrix} 0 & -1 \\ 1 & 0 \end{smallmatrix} \right] \in \SLC$, it is easy to show that 
\begin{align}
    L^{-1} = J^{-1} L^\top J, \label{inv_conj}
\end{align}
where $L^\top \in \textrm{U}$ is the transpose of $L$, belonging to the subgroup $\textrm{U} \in \SLC$ consisting of all \textit{upper-triangular} elements. Critically, $L^\top$ acts on $\RS$ not projectively but as an affine transformation,
\begin{align}
    L^\top z = a^2z + an, \label{LT_act}
\end{align}
equivalent to a planar rotation and dilation, followed by a translation.  Second, for all $z \in \RS$,
\begin{align*}
Jz = J^{-1}z = -z^{-1}
\end{align*}
from which it follows that 
\begin{align}
     J \, \fB_{ms}^t = \ J^{-1} \, \fB_{ms}^t = e^{is\pi} \, \fB_{-m-s}^{-t}. \label{inv_rep}
\end{align}

Combining the observations in Equations~(\ref{inv_conj})~and~(\ref{inv_rep}), we have
\begin{align}
    L \, \fB_{ms}^t & \stackrel{(\ref{inv_conj})}{=}  J^{-1} L^{-\top} J \, \fB_{ms}^t \nonumber \\
    & \stackrel{(\ref{inv_rep})}{=} e^{is\pi} \big[J^{-1} L^{-\top} \fB_{-m-s}^{-t}\big]\label{JT_rep},
\end{align}
Our strategy now becomes clear. Using Equation~(\ref{JT_rep}), we can view the transformation of $\fB_{ms}^t$ by $L$ as the transformation of $\fB_{-m-s}^{-t}$ by $L^{-\top}$, followed by $J$. This allows us to replace the projective action of $\OPS{}$ with the affine action of the upper-triangular subgroup $\textrm{U}$ -- the group of rotations, translations, and dilations -- whose representations are better understood \cite{vilenkin1978special, vilenkin1991representation}. Our goal is now to find an expansion of $L^{-\top} \, \fB_{-m-s}^{-t}$, which we can convert to the desired expansion for $L \, \fB_{ms}^{t} \in \OPS{}$ via the simple action of $J^{-1}$ in Equation~(\ref{inv_rep}). 

We recover an expansion of $L^{-\top} \, \fB_{-m-s}^{-t}$ as follows: First, we apply the Hankel transform in the radial dimension which will allow us to represent $\fB_{-m-s}^{-t}$ in terms of the irreducible unitary representations (\textbf{IURs}) of \textrm{SE}(2) -- the group of planar rotations and translations. From here we can use the regular representation of the group to separate the rotational and translational components of $L^{-\top}$. To handle the remaining dilation, we apply the Mellin transform, which results in the desired expansion. 

To simplify notation, we convert to polar coordinates 
\begin{align*}
    z \mapsto (\abs[]{z}, \textrm{Arg} \, z) \equiv (r, \vartheta).
\end{align*}
In these coordinates $\fB_{ms}^t$ becomes
\begin{align*}
\fB_{ms}^t(z) \mapsto \fB_{ms}^t(r, \vartheta) = r^{is - t} \, e^{im\vartheta}.
\end{align*}
Similarly, we express $a^2$, the rotational and dilational component $L^\top z$,  and $an$, the translational component of $L^\top z$, as
\begin{align*}
a^2 = \alpha e^{ i \varphi} \qquad \textrm{and} \qquad an = \tau e^{i \varkappa}
\end{align*}
for some $\alpha, \tau \in \mathbb{R}_{> 0}$ and $\varphi, \varkappa \in [0, 2\pi)$.

The following calculations were performed with the aid of Mathematica 13.0 \cite{Mathematica}. We begin by expressing $r^{-is + t}$ in terms of its Hankel expansion in the angular frequency $-m$
\begin{align}
    r^{-is + t} = 2^{1-i s + t } \, {\bf R}_{ms}  \int_{0}^{\infty}  \varrho^{is - 1 - t} \,  J_{-m}(\varrho r) \,  d\varrho, \label{r_hankel}
\end{align}
where
\begin{align}
    {\bf R}_{ms} &=   
    \begin{cases}
       \begin{aligned}
        &\frac{ \Gamma \left( 1 - \frac{ m - t + i s}{2}\right)}{\Gamma\left(\frac{-m - t + i s}{2}\right)} &  m \leq 0 \\[1em]
        &(-1)^m \frac{ \Gamma \left( 1 - \frac{-m - t + i s}{2}\right)}{\Gamma\left(\frac{m - t + i s}{2}\right)} & m > 0 
       \end{aligned} 
   \end{cases}. 
\end{align}
and $\Gamma$ and $J_{-m}$ denote the Gamma function and Bessel functions of the first kind, respectively. Substituting the Hankel expansion of $r^{-is + t}$ in Equation~(\ref{r_hankel}) into the polar coordinate expression for $\fB_{-m-s}^{-t}$ gives
\begin{align}
    \fB_{-m-s}^{-t}(r, \vartheta) &= 2^{1-i s + t } \, e^{-i m \vartheta} \,  {\bf R}_{ms}  \int_{0}^{\infty}  \varrho^{is - 1 - t} \,  J_{-m}(\varrho r) \,  d\varrho. \label{fB_hankel}
\end{align}
The matrix elements of the IURs of \textrm{SE}(2) are given by \cite{chirikjian2016harmonic}
\begin{align}
    h_{mn}^{\varrho}(r, \vartheta, \phi) = i^{n - m} \, e^{-in\phi - i(m - n) \vartheta} \, J_{n-m}(\varrho r), \label{se2_iur}
\end{align}
where $\phi$ is the angle of rotation and $r$ and $\vartheta$ are the magnitude and polar angle of the translation, respectively. It follows that Equation~(\ref{fB_hankel}) can be written as 
\begin{align}
    \fB_{-m-s}^{-t}(r, \vartheta) &= 2^{1-i s + t } \, i^{m} \,  {\bf R}_{sm}  \int_{0}^{\infty}  \varrho^{is - 1 - t} \,  h_{m0}^{\varrho}(r, \vartheta, 0) \,  d\varrho.
\end{align} 
From here we can use the regular representation of the group to separate the rotational and translational components of $L^{-\top}$, expanding $L^{-\top} \, \fB_{-m-s}^{-t}$ as
\begin{align}
\begin{aligned}
    &\Big[L^{-\top} \, \fB_{-m-s}^{-t}\Big](r, \vartheta) = \\
    & 2^{1-i s + t } \, i^{m} \,  {\bf R}_{ms} \\
    & \times \sum_{u = -\infty}^{\infty} \int_{0}^{\infty}  \varrho^{is - 1 - t} \, h_{mu}^{\varrho}(\tau, \varkappa, \varphi) \, h_{u0}^{\varrho}(\alpha r, \vartheta, 0) \,  d\varrho.
    \end{aligned} \label{se2_qr}
\end{align}
Expanding the integral in Equation~(\ref{se2_qr}) gives
\begin{align}
& \int_{0}^{\infty} \varrho^{is -1 - t}  h_{mu}^{\varrho}(\tau, \varkappa, \varphi) \, h_{u 0}^{\varrho}(\alpha r, \vartheta, 0) \, d\rho \nonumber \\
& \stackrel{(\ref{se2_iur})}{=} i^{-m} \, e^{-iu(\varphi + \vartheta) -i(m-u)\varkappa}  \nonumber \\
& \qquad \times \int_{0}^{\infty} \varrho^{is -1 - t} \, J_{m - u}(\tau \varrho) \, J_{-u}(\alpha r \varrho) \, d\varrho \nonumber \\
& \stackrel{\phantom{(\ref{se2_iur})}}{=} i^{-m} \, e^{-iu(\varphi + \vartheta) -i(m-u)\varkappa} \, \alpha^{-is + t} \nonumber \\
& \qquad \times \int_{0}^{\infty} \varrho^{is -1 - t} \, J_{m - u}(\alpha^{-1} \tau \varrho) \, J_{-u}(r \varrho) \, d\varrho \label{gotcha} \\
& \stackrel{\phantom{(\ref{se2_iur})}}{=}  2^{is - 1 - t} \, i^{-m} \, e^{-iu(\varphi + \vartheta) -i(m-u)\varkappa} \, \tau^{-is + t} \, M_{smu}^{\, t}(\alpha^2 \tau^{-2} r^2), \label{2nd_hankel}
\end{align}
where the second equality follows from the change of variables $r \mapsto \alpha r$, and the third from evaluation of the integral (the inverse Hankel transform in the $(m-u)-$th frequency). Here,
\begin{align*}
    & M_{msu}^{\, t}(r^2) = \begin{cases}
       \begin{aligned}
           & G_{2,2}^{\,1,1} \, \left( \left. \begin{matrix} {\bf x}_{msu}^{t} \\ {\bf y}_{u} \end{matrix} \, \right| \, r^2 \right) & u \geq m \\[1em]
          & (-1)^{u -m} \, G_{2,2}^{\,1,1} \, \left( \left. \begin{matrix} {\bf x}_{-ms-u}^t \\ {\bf y}_{u} \end{matrix} \, \right| \, r^2 \right) & u < m
       \end{aligned}
    \end{cases}, \\
    {\bf x}_{msu}^t &= \left[\frac{1}{2}( 2 - u - is + m + t), \, \frac{1}{2}(2 + u - is - m + t)\right], \\
    {\bf y}_{u} & =\left[-\frac{1}{2}u, \, \frac{1}{2}u \right],
\end{align*}
with $G_{p,q}^{\,m,n} \, \left( \left. \begin{matrix} {\bf x} \\ {\bf y} \end{matrix} \, \right| \, z \right)$ denoting the Meijer G-function \cite{bateman1953higher}.
Plugging the expression for the integral in Equation~(\ref{2nd_hankel}) into the expression for $L^{-\top} \, \fB_{-m-s}^{-t}$ in Equation~(\ref{se2_qr}) gives
\begin{align}
\begin{aligned}
    &\Big[L^{-\top} \, \fB_{-m-s}^{-t}\Big](r, \vartheta) = \\
    &  {\bf R}_{ms}  \sum_{u = -\infty}^{\infty}  e^{-iu(\varphi + \vartheta) -i(m-u)\varkappa} \, \tau^{-is + t} \, M_{msu}^{\, t}(\alpha^2 \tau^{-2} r^2)
    \end{aligned} \label{penultimate_form}
\end{align}
The above expansion factors out the the rotational and translational components of $L^\top$ as desired, and the final step is to factor out the scale term $\alpha^2 \tau^{-2}$ acting on $r^2$ in the argument of the function $M_{msu}^{\, t}$. 

To do so, we decompose $M_{msu}^{\, t}$ using the Mellin transform. The basis functions of the Mellin transform $r^{\sigma - i\omega}$ are the IURs of the group of dilations acting via multiplication on the positive real line. By decomposing $M_{msu}^{\, t}$ in terms of these bases, we factor out the $\alpha^2 \tau^{-2}$ term in the argument using the regular representation of the group in the same manner as was done in Equation~(\ref{case_1_expansion}). Specifically,
for  $0 \leq t < 1$, and real numbers $\sigma_1, \sigma_2$ satisfying 
\begin{align}
    t < \sigma_1 < 2 \qquad \textrm{and} \qquad t-1 < \sigma_2 < 0 \label{sigma_cond}
\end{align}
$M_{msu}^{t}(r^2)$ can be decomposed as a sum of two Mellin transform expansions 
\begin{align}
    M_{msu}^{t}(r^2) = \frac{1}{2\pi} \sum_{j = 1}^2 \int_{-\infty}^{\infty} \prescript{}{j}{\bf M}_{msu}^{t, \sigma_j}(\omega) \, r^{\sigma_j - i\omega}  d\omega
 \label{mellin_decomp}.
\end{align}
where
\begin{align}
    &\prescript{}{1}{\bf M}_{msu}^{t, \sigma_1}(\omega) = \label{M1_coeff}\\
    &\begin{cases}
       \begin{aligned}
           & \frac{\Gamma\big(  \frac{-\sigma_1 + i\omega - u}{2} \big) \, \Gamma\big(\frac{u + is - m + \sigma_1 - i\omega - t}{2} \big)}{2 \, \Gamma \big(\frac{2 - u + \sigma_1 - i\omega}{2} \big) \, \Gamma \big(\frac{2 + u - is - m - \sigma_1 + i\omega + t}{2}  \big)} & u \geq m, \, u < 0 \\[1em]
           & (-1)^{u} \frac{\Gamma\big(  \frac{u - \sigma_1 + i\omega}{2} \big) \, \Gamma\big(\frac{u + is - m + \sigma_1 - i\omega - t}{2} \big)}{2 \, \Gamma \big(\frac{2 + u + \sigma_1 - i\omega}{2} \big) \, \Gamma \big(\frac{2 + u - is - m - \sigma_1 + i\omega + t}{2}  \big)}   & u \geq m, \, u > 0 \\[1em]
           & (-1)^{u - m} \frac{\Gamma\big(  \frac{-\sigma_1 + i\omega - u}{2} \big) \, \Gamma\big(\frac{-u + is + m + \sigma_1 - i\omega - t}{2} \big)}{2 \, \Gamma \big(\frac{2 - u + \sigma_1 - i\omega}{2} \big) \, \Gamma \big(\frac{2 - u - is + m - \sigma_1 + i\omega + t}{2}  \big)} & u < m, \, u < 0 \\[1em]
          & (-1)^{m} \frac{\Gamma\big(  \frac{u - \sigma_1 + i\omega}{2} \big) \, \Gamma\big(\frac{-u + is + m + \sigma_1 - i\omega - t}{2} \big)}{2 \, \Gamma \big(\frac{2 + u + \sigma_1 - i\omega}{2} \big) \, \Gamma \big(\frac{2 - u - is + m - \sigma_1 + i\omega + t}{2}  \big)} & u < m, \, u > 0 \\[1em]
          & 0 & u = 0, m \neq 0 \\[1em]
          & \frac{\Gamma\big(  \frac{2 - \sigma_1 + i\omega}{2} \big) \, \Gamma\big(\frac{is  + \sigma_1 - i\omega - t}{2} \big)}{2 \, \left(1 - \frac{2 - is + t}{2}\right) \, \Gamma \big(\frac{2 + \sigma_1 - i\omega}{2} \big) \, \Gamma \big(\frac{2 - is  - \sigma_1 + i\omega + t}{2}  \big)} & u = m = 0
       \end{aligned}
    \end{cases} \nonumber
\end{align}
and 
\begin{align}
     &\prescript{}{2}{\bf M}_{msu}^{t, \sigma_2}(\omega) = \label{M2_coeff} \\
     &\begin{cases}
        \begin{aligned}
             & 0 & u \neq 0 \\[1em]
            &\frac{\Gamma\big(  \frac{-\sigma_2 + i\omega}{2} \big) \, \Gamma\big(\frac{is  - m + \sigma_2 - i\omega - t}{2} \big)}{2 \, \Gamma \big(\frac{2 + \sigma_2 - i\omega}{2} \big) \, \Gamma \big(\frac{2 - is  -m - \sigma_2 + i\omega + t}{2}  \big)} & u = 0, \, m < 0 \\[1em] 
             & (-1)^m \frac{\Gamma\big(  \frac{-\sigma_2 + i\omega}{2} \big) \, \Gamma\big(\frac{is  + m + \sigma_2 - i\omega - t}{2} \big)}{2 \, \Gamma \big(\frac{2 + \sigma_2 - i\omega}{2} \big) \, \Gamma \big(\frac{2 - is  + m - \sigma_2 + i\omega + t}{2}  \big)} & u = 0, \, m > 0 \\[1em]
             & \frac{\Gamma\big(  \frac{-\sigma_2 + i\omega}{2} \big) \, \Gamma\big(\frac{2 + is + \sigma_2 - i\omega - t}{2} \big)}{2 \, \left(1 - \frac{2 - is + t}{2}\right) \Gamma \big(\frac{2 + \sigma_2 - i\omega}{2} \big) \, \Gamma \big(\frac{2 - is  - \sigma_2 + i\omega + t}{2}  \big)} & u = m = 0
        \end{aligned}
     \end{cases}. \nonumber
\end{align}
Here, the bounds of $\sigma_1$ and $\sigma_2$ are due to the particular properties of the Mellin transform and ensure that the $M_{msu}^{\, t}$ can be recovered from the coefficients of the forward transform \cite{vilenkin1991representation}.  Then, replacing $M_{msu}^{\, t}(\alpha^2 \tau^{-2} r^2)$ in Equation~(\ref{penultimate_form}) with its Mellin decomposition in Equation~(\ref{mellin_decomp}), rearranging terms, and converting back to complex coordinates $(r, \vartheta) \mapsto (\abs[]{z}, \textrm{Arg} \, z)$ gives 
\begin{align}
    &\Big[L^{-\top} \, \fB_{-m-s}^{-t}\Big](z) = \nonumber \\
    & \begin{aligned}
    &  \frac{{\bf R}_{sm}}{2\pi}   \sum_{u = -\infty}^{\infty} \int_{-\infty}^{\infty} \sum_{j = 1}^2 \fB_{-u-\omega }^{-\sigma_j}(a^2) \, \fB_{u-m \, \omega-s}^{\sigma_j - t}(an) \\
    & \qquad \qquad \qquad  \qquad \times \prescript{}{j}{\bf M}_{msu}^{t, \sigma_j}(\omega) \, \fB_{-u-\omega}^{-\sigma_j}(z) \, d\omega.
\end{aligned}
\label{upper_tri_expansion}
\end{align}
Finally, substituting this expression into Equation~(\ref{JT_rep}) and using Equation~(\ref{inv_rep}) we arrive at the desired expansion of $L \, \fB_{ms}^{t}$:
\begin{align}
    &\Big[L \, \fB_{ms}^{t}\Big](z) = \nonumber \\
    & \begin{aligned}
    &  \frac{{\bf R}_{sm}}{2\pi}   \sum_{u = -\infty}^{\infty} \int_{-\infty}^{\infty} \sum_{j = 1}^2 \fB_{-u-\omega }^{-\sigma_j}(a^2) \, \fB_{u-m \, \omega-s}^{\sigma_j - t}(an) \\
    & \qquad \qquad \qquad  \qquad \times \prescript{}{j}{\bf M}_{msu}^{t, \sigma_j}(\omega) \, \fB_{u  \omega}^{\sigma_j}(z) \, d\omega.
    \end{aligned}
    \label{case_2_expansion}
\end{align}
\noindent \textbf{General Case $(n \in \mathbb{C}$) :}
We combine the expansions of $L \, \fB_{ms}^{t}$ for the cases $n = 0$ in Equation~(\ref{case_1_expansion}) and $n \neq 0$ in Equation~(\ref{case_2_expansion}) into a general form holding for all $L \in \OPS{}$. Specifically, we define the following functions mapping a lower-triangular matrix to a set of filter coefficients,
\begin{align}
    \prescript{u\omega}{1}{\BB}_{ms}^{t \sigma_1}(L) & = (1 - \delta_{\abs[]{n}0}) \frac{{\bf R}_{sm}}{2\pi} \fB_{-u-\omega }^{-\sigma_1}(a^2) \nonumber \\ 
    &\qquad \times \fB_{u-m \, \omega-s}^{\sigma_1 - t}(an) \prescript{}{1}{\bf M}_{msu}^{t, \sigma_j}(\omega), \label{xi_1} \\[1em]
    \prescript{u\omega}{2}{\BB}_{ms}^{t \sigma_2}(L) & =  (1 - \delta_{\abs[]{n}0}) \frac{{\bf R}_{sm}}{2\pi} \fB_{-u-\omega }^{-\sigma_2}(a^2) \nonumber \\
    & \qquad \times \fB_{u-m \, \omega-s}^{\sigma_2 - t}(an) \prescript{}{2}{\bf M}_{msu}^{t, \sigma_2}(\omega), \label{xi_2} \\[1em]
    \prescript{u\omega}{3}{\BB}_{ms}^{t \sigma_3}(L) & =  \delta_{\abs[]{n}0} \delta_{mu} \delta(s - \omega) \, \fB_{-m-s}^{-\sigma_3}(a^2) \label{xi_3},
\end{align}
where $\delta_{xy}$ and $\delta(x)$ denote the Kronecker and Dirac delta functions, respectively, and $\sigma_1, \sigma_2$ satisfy the conditions in Equation~(\ref{sigma_cond}). Given $\fB_{ms}^{t}$ for some $t \in (0, 1)$ and setting  $\sigma_3 = t$, it follows from Equations~(\ref{case_1_expansion})~and~(\ref{case_2_expansion}) that for any $L \in \OPS{}$, $L \, \fB_{ms}^t$ can be expanded as
\begin{align}
   L \, \fB_{ms}^t =  \sum_{u= -\infty}^{\infty} \int_{-\infty}^{\infty}  \sum_{j=1}^3 \prescript{u\omega}{j}{\BB}_{ms}^{t\sigma_j}(L)  \, \fB_{u\omega}^{\sigma_j} \, d\omega \label{basis_expansion}.
\end{align}

\subsection{Transformation of Filters}
Using the expansion of the transformation of basis functions in Equation~(\ref{basis_expansion}), it is straight-forward to recover the expansion of the transformation of filters $f$ of the form
\begin{align*}
    f = \sum_{m=-M}^M\sum_{s=-N}^N  b_{ms} \, \fB_{ms}^t,
\end{align*}
by elements of $\OPS{}$. Namely, 
\begin{align}
\begin{aligned}
    L \, f & \stackrel{\phantom{(\ref{basis_expansion})}}{=} \sum_{m=-M}^M\sum_{s=-N}^N  b_{ms} \, L \, \fB_{ms}^t \\
    & \stackrel{(\ref{basis_expansion})}{=} \sum_{m=-M}^M\sum_{s=-N}^N b_{ms} \sum_{u= -\infty}^{\infty} \int_{-\infty}^{\infty}  \sum_{j=1}^3 \prescript{u\omega}{j}{\BB}_{ms}^{t\sigma_j}(L)  \, \fB_{u\omega}^{\sigma_j} \, d\omega \\
    & \stackrel{\phantom{(\ref{basis_expansion})}}{=} \sum_{u= -\infty}^{\infty} \int_{-\infty}^{\infty}  \sum_{j=1}^3 \underbrace{\left[\sum_{m=-M}^M\sum_{s=-N}^N b_{ms}  \prescript{u\omega}{j}{\BB}_{ms}^{t\sigma_j}(L) \right]}_{\prescript{}{j}{\LB}_{u\omega}^{t \sigma_j}(L, {\bf b})}  \, \fB_{u\omega}^{\sigma_j} \, d\omega, 
    \end{aligned}\label{filter_expansion}
\end{align}
where  ${\mathbf b}$ is the $(2M+1)\times(2N+1)$-dimensional vector of coefficients of $f$ and $\prescript{}{j}{\LB}_{u\omega}^{t \sigma_j}$ maps a lower-triangular element and a set of filter coefficients to the coefficient of $\fB_{u\omega}^{\sigma_j}$ in the expansion. 

\subsection{Implementation} \label{a:quad}

As discussed in \S\ref{s:discretization} we approximate the expansion of $L \, f$ by truncating the summation over $u$, and replacing the integration over the real line with a discrete approximation using quadrature. The summation is a consequence of the addition theorem for Bessel functions \cite{watson1995treatise}. Here it arises in the regular representation of $\textrm{SE}$(2) used in Equation~(\ref{se2_qr}) to factor the rotational and translational components of the transformations from the argument of the basis functions.  Fortunately, it converges rapidly for low angular basis frequencies $m$ and we typically find truncation at $M + 1$ terms to be sufficient.

Approximating the integral in the expansion is less straight-forward. For example, the reader may have noticed that the second to last equality -- Equation~(\ref{gotcha}) -- in the expansion of the integral in Equation~(\ref{se2_qr}) provides a seemingly suitable separation of variables for our purposes, raising the question of why we expend the additional effort dealing with the Mellin transform. The problem with the expansion offered by Equation~(\ref{gotcha}) is that the product of Bessel functions in the integrand is highly-oscillatory, and decays either very rapidly or very slowly depending on the values of $\alpha, \tau$ and $r$ making a low-order numerical integration scheme impossible.

However, it turns out that first recollecting the separated terms by evaluating the integral (the inverse Hankel transform) -- Equation~(\ref{2nd_hankel}) -- and then expanding the solution again using the Mellin transform -- Equation~(\ref{mellin_decomp}) -- gives us something we can handle numerically. (Equivalently, we could have first expanded $J_{-u}(r\rho)$ in Equation~(\ref{gotcha}) using the Mellin transform, then applied the inverse Hankel transform to arrive at a similar expression). Despite being aesthetically-challenged, the Mellin transform coefficients $\prescript{}{j}{\bf M}_{msu}^{t, \sigma_j}$  in Equations~(\ref{M1_coeff} - \ref{M2_coeff}) have several nice properties which make possible a low-order quadrature approximation of the integral: They decay rapidly, are relatively smooth, and retain these properties even with increasing  values of $\abs[]{m}, \abs[]{s}$ and $\abs[]{u}$.

\subsubsection*{Quadrature}

For a given choice of $t \in (0, 1)$ determining the localization of the filters, the smoothness and decay of the Mellin coefficients can further be controlled by the choices of $\sigma_1$ and $\sigma_2$ satisfying Equation~(\ref{sigma_cond}) such that the discretization of the Mellin expansion of $M_{msu}^t(r^2)$ in Equation~(\ref{mellin_decomp}) with a fixed number of samples $\{\omega_q\}_{1 \leq q \leq Q}$ and trapezoidal quadrature weights $\{w_q\}_{1 \leq q \leq Q}$,
\begin{align}
\begin{aligned}
&M_{msu}^t(r^2; \sigma_1, \sigma_2, \omega_1, \ldots, \omega_Q) \equiv \\
& \frac{1}{2\pi} \sum_{j = 1}^2 \sum_{q=1}^Q  w_q  \prescript{}{j}{\bf M}_{msu}^{t, \sigma_j}(\omega_q) \, r^{\sigma_j - i \omega_q},
\end{aligned} \label{disc_decomp}
\end{align}
provides the best approximation.

In practice, we formulate the selection of $\sigma_1$ and $\sigma_2$ as an optimization problem: Given a fixed number of quadrature samples $Q$, we treat $\sigma_1, \, \sigma_2$ and the quadrature sample points $\{\omega_q\}_{1 \leq q \leq Q}$ as optimization variables, and seek to minimize the reconstruction error between $M_{msu}^t(r^2)$ and the discretization of its Mellin expansion in Equation~(\ref{disc_decomp}):
\begin{align}
    \begin{aligned}
        & {\mathcal E}_{msu}^t [\sigma_1, \sigma_2, \omega_1, \ldots, \omega_Q] =  \\ 
    &\int_{0}^{\infty} \abs[\Big]{ M_{msu}^t(r^2) - M_{msu}^t(r^2; \sigma_1, \sigma_2, \omega_1, \ldots, \omega_Q) }^2 \, dr
    \end{aligned} \label{recon_error}
\end{align} However, instead of computing a distinct quadrature scheme to approximate each function $M_{msu}^t(r^2)$, for each value of $u$, we optimize for a quadrature scheme that best reconstructs $M_{msu}^t(r^2)$ for all tuples $(m, s, u)$ so that the  bases $\fB_{u \omega}^{\sigma_j}$ remain independent of $m$ and $s$ in the transformation of an $M, N$ band-limited filter as in Equation~(\ref{approx}). That is, for each $-M' \leq u \leq M'$, we choose $\sigma_1^u, \sigma_2^u$ and $\{\omega_q^u\}_{1 \leq q \leq Q}$ to be the minimizers of 
\begin{align}
\begin{aligned}
    {\mathcal E}_u^t[\sigma_1^u, \sigma_2^u, \omega_1^u, \ldots, \omega_Q^u] = & \sum_{\substack{-M \leq m \leq M \\ -N \leq s \leq N}} {\mathcal E}_{msu}^t [\sigma_1^u, \sigma_2^u, \omega_1^u, \ldots. \omega_Q^u],
\end{aligned} \label{quad_energy}
\end{align}
and are solved for via gradient descent in a pre-processing step. Once the sample points have been recovered, the values of Mellin transform coefficients $\prescript{}{j}{\bf M}_{msu}^{t, \sigma_j}(\omega_q^u)$ in Equations~(\ref{xi_1}-\ref{xi_2}) can be computed in the same pre-processing regime, avoiding the evaluation of the Gamma functions at run-time.

\section{Memory Footprint and Run Time}
\label{a:overhead}
\subsection{Memory Footprint}
The most computationally expensive step of our \mob{}-equivariant CNNs is the innermost sum of the reduction inside the convolutional layers in Equation~(\ref{conv_reduction}),
\begin{align*}
 \feat{}_{c'}' = \sum\limits_{\substack{-M'\leq m\leq M'\\1\leq q \leq Q\\1\leq j\leq 3}}
 \left( \, \sum_{c=1}^{C} \,  \h_{\feat{}_c} \, w_q \, \prescript{}{j}{\LB}_{ms_q}^{ t\sigma_j}\big(\T_{\feat{}_c},{\mathbf b}^{cc'}\big) \, \, {*}_{e} \, \, \fB_{ms_q}^{\sigma_j} \, \right).
\end{align*}
In practice we only sum over the first two indices of $j$, replacing $an$ with a small constant factor $\varepsilon = 0.05$ whenever it vanishes. In addition, it follows from Equation~(\ref{M2_coeff}) that for $j = 2$, $\prescript{}{2}{\LB}_{ms_q}^{ t\sigma_2}\big(\T_{\feat{}_c},{\mathbf b}^{cc'}\big)$ vanishes whenever $m \neq 0$. Then, using the expansion
\begin{align}
    \prescript{}{j}{\LB}_{ms_q}^{ t\sigma_j}\big(\T_{\feat{}_c},{\mathbf b}^{cc'}\big) \stackrel{(\ref{filter_expansion})}{=} \sum_{k=-M}^M\sum_{l=-N}^N b_{kl}^{cc'}  \prescript{ms_q}{j}{\BB}_{kl}^{t\sigma_j}(\T_{\feat{}_c}),
\end{align}
the inner sum in the reduction is expressed as broadcasted matrix multiplication,
 $\boldsymbol{b} \, \boldsymbol{\Xi}$,
where $\boldsymbol{b}$ is the $C' \times C\cdot (2M + 1) \cdot (2N + 1)$ dimensional $64$-bit complex tensor of learnable filter parameters $b_{kl}^{cc'}$ and $\boldsymbol{\Xi}$ is the $C\cdot (2M + 1) \cdot (2N + 1) \times (2M' + 2) \times Q \times 2B \times 2B$ $64$-bit complex tensor corresponding to the values of $\prescript{ms_q}{j}{\BB}_{kl}^{t\sigma_j}(\T_{\feat{}_c})$. The   first $(2M' + 1)$ indices in the second dimension of $\boldsymbol{\Xi}$ correspond to $j = 1$ with $-M' \leq m \leq M$ and the last index corresponds to $j = 2$ with $m = 0$.

 Typical layers use $M=N=1$ band-limited filters, with $M' = 2$ and $Q = 30$ quadrature points. Here, $\boldsymbol{\Xi}$ is a $CB^2 51840$-byte tensor which must be stored on device memory. As such, \mob{} convolution layers have a large memory footprint at high resolutions $B \geq 64$ -- each input channel requires approximately $0.2$ GB or more of device memory per convolution. We find that memory overhead and inference time can both be reduced by fusing both the creation of $\boldsymbol{\Xi}$ and matrix multiplication by $\boldsymbol{b}$ into a single operation (\textit{e.g.} via TorchScript). That said, all of our experiments are performed using an NVIDIA RTX A6000 GPU and the \mob{} convolution U-Net used in the segmentation task in \S\ref{segmentation_section} consumes approximately $80\%$ of the $48$ GB of device memory. 

\subsection{Run Time}
\begin{figure}[t]
\begin{picture}(0.925\linewidth,0.505\columnwidth)
\put(5,5){\includegraphics[width=0.95\linewidth]{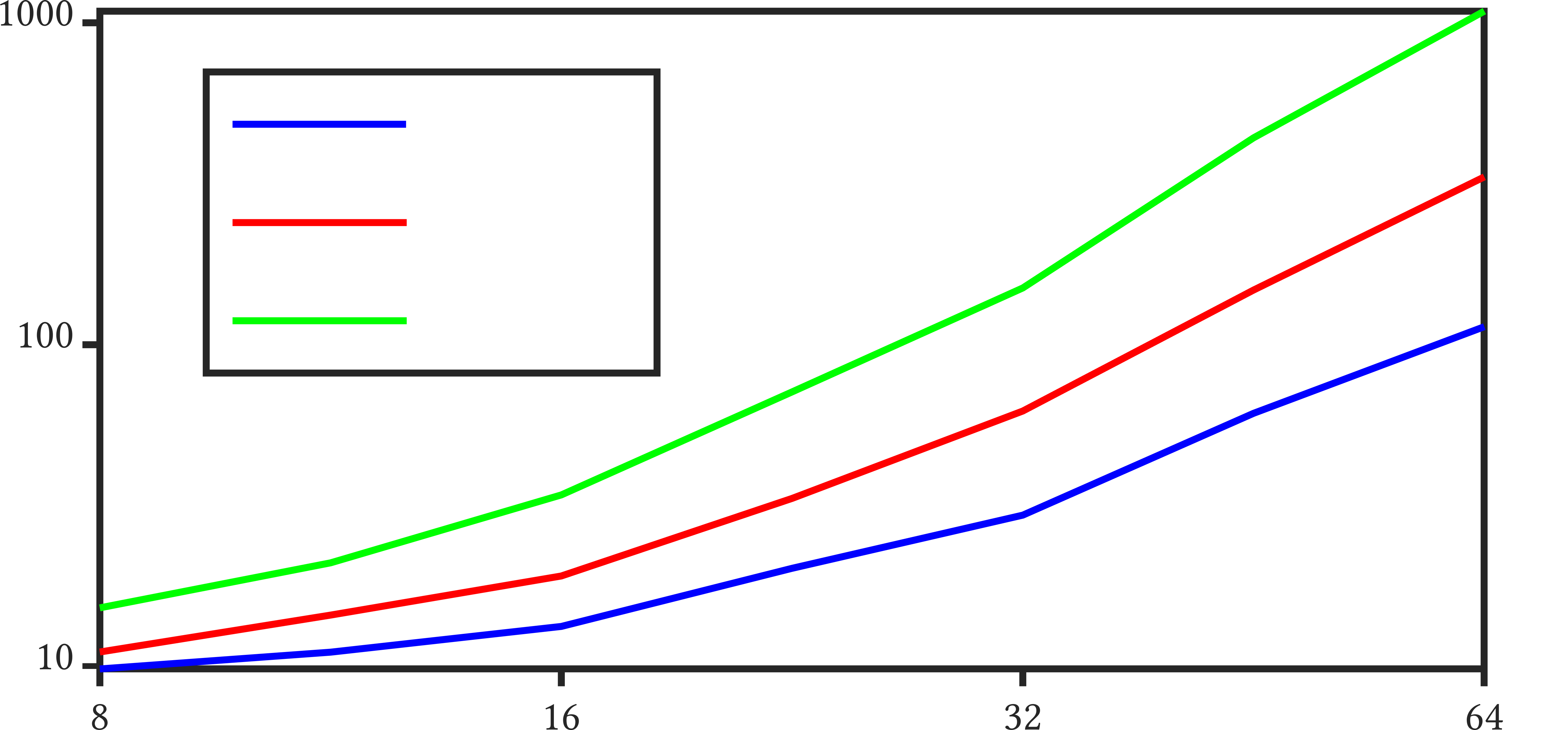}}
\put(103, -5){Band-limit}
\put(-8, 27){\rotatebox{90}{Mean Run Time (ms)}}
\put(68, 92){ {\small $C = 16$}}
\put(68, 78){ {\small $C = 32$}}
\put(68, 62.5){ {\small $C = 64$}}

\end{picture}
    \caption{Mean run time of a $(M=1, N=1, Q=2, P=30)$ \mob{} convolution block with $C = 16, 32, 64$ channels over $100$ initializations as a function of band-limit. \label{time_plot}
 }
\end{figure}

Figure~\ref{time_plot} shows the mean run time of a single \mob{} convolution block with increasing numbers of channels as a function of band-limit. Despite having a large memory footprint, \mob{} convolutions are relatively fast -- only the forward passes of high band-limit, large-width modules exceed $100$ ms. Training the networks in the classification (\S\ref{sec:classification}) and segmentation (\S\ref{segmentation_section}) tasks takes approximately $30$ minutes and four hours, respectively. 

\section{Segmentation}
\label{a:seg}
\begin{figure*}[t]
    \includegraphics[width=\textwidth]{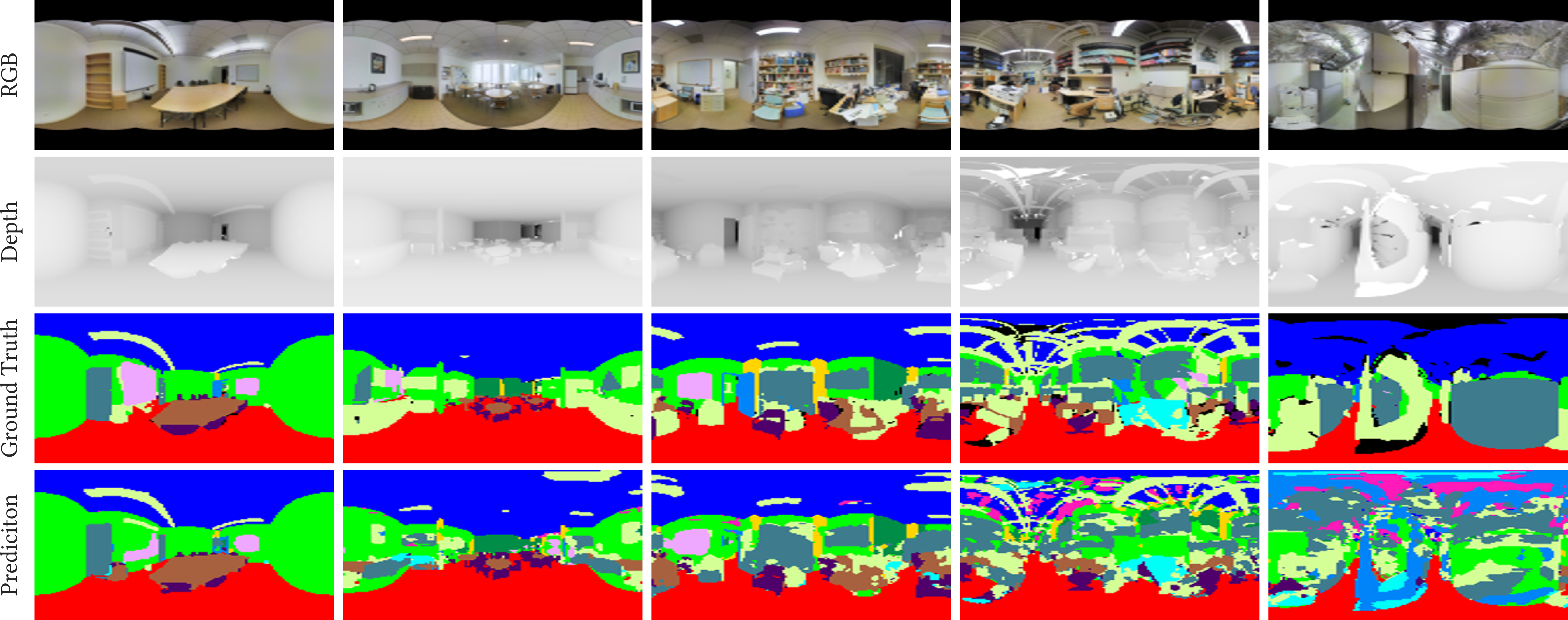}
        \caption{Visualization of semantic segmentation results on images in the test set. \label{seg_fig}}
\end{figure*}
We provide a visualization of our model's predicted segmentation labels relative to the ground truth in Figure~\ref{seg_fig}. Our model performs well in simpler scenes with complete depth information (left), but becomes less effective as label granularity increases (moving to the right). Our model fails (far right) in the presence of large, specular surfaces.

\end{document}